\def\BibTeX{{\rm B\kern-.05em{\sc i\kern-.025em b}\kern-.08em
    T\kern-.1667em\lower.7ex\hbox{E}\kern-.125emX}}
\newtheorem{theorem}{Theorem}
\newtheorem{lemma}{Lemma}
\newtheorem{definition}{Definition}
\newtheorem{proof}{Proof}
\newtheorem{corollary}{Corollary}
\begin{document}

\title{
	A Conflict-Based Search Framework for Multi-Objective Multi-Agent Path Finding
}
\author{Zhongqiang Ren$^{1}$, Sivakumar Rathinam$^{2}$ and Howie Choset$^{1}$
\thanks{Manuscript received November, 30, 2021; Revised April, 28, 2022; Accepted June, 11, 2022. This paper was recommended for publication by Editor Jing Li upon evaluation of the Associate Editor and Reviewers' comments. This work was supported by National Science Foundation under Grant No. 2120219 and 2120529. \textit{(Corresponding author: Zhongqiang Ren.)}
}
\thanks{Zhongqiang Ren and Howie Choset are with Carnegie Mellon University, 5000 Forbes Ave., Pittsburgh, PA 15213, USA. (email: zhongqir@andrew.cmu.edu; choset@andrew.cmu.edu).}%
\thanks{Sivakumar Rathinam is with Texas A\&M University, College Station, TX 77843-3123. (email: srathinam@tamu.edu).}
}


\maketitle


\begin{abstract}
	Conventional multi-agent path planners typically compute an ensemble of paths while optimizing a single objective, such as path length. However, many applications may require multiple objectives, say fuel consumption and completion time, to be simultaneously optimized during planning and these criteria may not be readily compared and sometimes lie in competition with each other. The goal of the problem is thus to find a Pareto-optimal set of solutions instead of a single optimal solution. Naively applying existing multi-objective search algorithms, such as multi-objective A* (MOA*), to multi-agent path finding may prove to be inefficient as the dimensionality of the search space grows exponentially with the number of agents. This article presents an approach named Multi-Objective Conflict-Based Search (MO-CBS) that attempts to address this so-called curse of dimensionality by leveraging prior Conflict-Based Search (CBS), a well-known algorithm for single-objective multi-agent path finding, and principles of dominance from multi-objective optimization literature. We also develop several variants of MO-CBS to improve its performance. We prove that MO-CBS and its variants can compute the entire Pareto-optimal set. Numerical results show that MO-CBS outperforms MOM*, a recently developed state-of-the-art multi-objective multi-agent planner.

\end{abstract}

\def\abstractname{Note to Practitioners}
\begin{abstract}
	The motivation of this article originates from the need to optimize multiple path criteria when planning conflict-free paths for multiple mobile robots in applications such as warehouse logistics, surveillance, construction site  routing, and hazardous  material  transportation.
Existing methods for multi-agent planning typically consider optimizing a single path criteria.
This article develops a novel multi-objective multi-agent planner as well as its variants that are guaranteed to find all Pareto-optimal solutions for the problem.
We also provide an illustrative example of the algorithm to plan paths for multiple agents that transport materials in a construction site while optimizing both path length and risk. In this example, computing and visualizing a set of Pareto-optimal solutions makes it intuitive for the practitioner to understand the underlying trade-off between conflicting objectives and to choose the most preferred solution for execution based on their domain knowledge.
\end{abstract}

\begin{IEEEkeywords}
Multi-Agent Path Finding, Path Planning, Multi-Objective Optimization.
\end{IEEEkeywords}


\graphicspath{{./figures/}}

\section{Introduction}\label{sec:intro}

\begin{figure}[t]
	\centering
	\includegraphics[width=\linewidth]{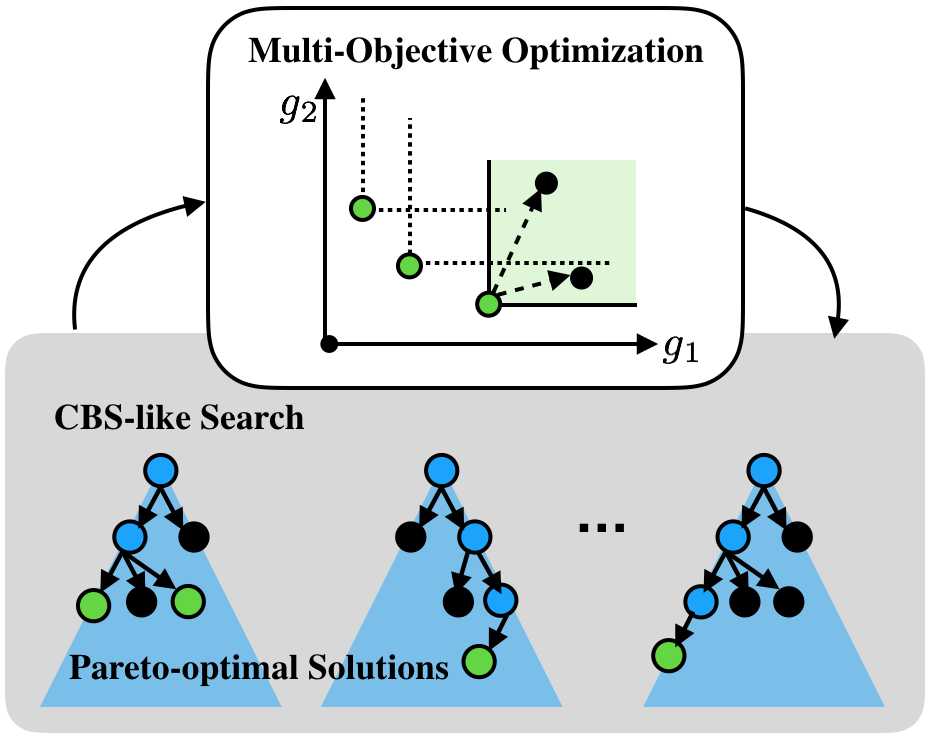}
	\caption{A conceptual visualization of Multi-Objective Conflict-Based Search.
	It leverages Conflict-Based Search (CBS) to resolve conflicts between agents (the lower half of the figure) and compares candidate solutions using the dominance principle (the upper half of the figure) from the Multi-Objective Optimization literature in order to find all conflict-free Pareto-optimal solutions.}
	\label{fig:token_fig}
\end{figure}

\IEEEPARstart{M}{ulti}-Agent Path Finding (MAPF) computes a set of collision-free paths for multiple agents connecting their respective start and goal locations while optimizing a scalar measure of paths. Variants of MAPF have been widely studied in the robotics community over the last few years~\cite{stern2019multi}. In this article, we investigate a natural generalization of the MAPF to include multiple objectives for multiple agents and hence the name {\it Multi-Objective Multi-Agent Path Finding} (MOMAPF). In MOMAPF, agents have to trade-off multiple objectives such as completion time, travel risk and other domain-specific measures. MOMAPF is a generalization of MAPF, and is therefore NP-Hard~\cite{yu2013structure_nphard}.

In the presence of multiple conflicting objectives, in general, no (single) solution can simultaneously optimize all the objectives.
Therefore, the goal of MOMAPF is to find the set of all Pareto-optimal solutions rather than a single optimal solution as in MAPF.
A solution is Pareto-optimal if there exists no other solution that will yield an improvement in one objective without causing a deterioration in at least one of the other objectives.
Finding this set of solutions while ensuring collision-free paths for agents in each solution is quite challenging: even though there are many single-agent multi-objective search algorithms \cite{moastar,mandow2008multiobjective,ulloa2020simple} that can compute all Pareto-optimal solutions, a naive application of such algorithms to the MOMAPF problem may prove to be inefficient as the size of search space grows exponentially with respect to the number of agents~\cite{yu2013structure_nphard,hansen1980bicriterion}. 
Among the algorithms that optimally solve the single-objective MAPF problems, Conflict-Based Search (CBS) \cite{sharon2015conflict} has received significant attention due to its computational efficiency on average. This method has also been extended to solve several other variants of MAPF as noted in \cite{Ma2017Lifelong,cohen2019optimal}. However, how to leverage CBS to solve MOMAPF remains an under-explored question. This article aims to address this gap. By building on multi-objective dominance techniques \cite{ehrgott2005multicriteria,mandow2008multiobjective}, we develop a new algorithm named Multi-Objective Conflict-Based Search (MO-CBS) (Fig.\ref{fig:token_fig}) that is able to compute the entire Pareto-optimal set of collision-free paths with respect to multiple objectives.

MO-CBS takes a similar strategy as CBS to resolve conflicts along paths of agents while extending CBS to handle multiple objectives. MO-CBS begins by computing individual Pareto-optimal paths for each agent ignoring agent-agent conflicts and letting agents follow those paths. When a conflict between agents is found along their paths, MO-CBS splits the conflict by adding constraints to the individual search space of each agent (involved in the conflict) and invokes a single-agent multi-objective planner to compute new individual Pareto-optimal paths subject to those added constraints. In addition, MO-CBS uses dominance rules to select candidate solutions for conflict-checking and compares them until all the candidates are either pruned or identified as Pareto-optimal.

MO-CBS is a search framework in a sense that different (single-agent) planners can be used for the low-level search.
This work investigates using both BOA*~\cite{ulloa2020simple}, a state-of-the-art single-agent bi-objective planner, and NAMOA*-dr~\cite{pulido2015dimensionality}, a single-agent planner for multiple objectives, within the MO-CBS framework.
Additionally, we also develop a variant of MO-CBS that takes a different expansion strategy on its high-level search to improve memory usage.
Compared with an existing approach MOM*~\cite{ren2021subdimensional} that is guaranteed to find all Pareto-optimal solutions for MOMAPF, the numerical results show that the proposed MO-CBS and its variants outperform MOM* in terms of success rates under bounded time in various maps.
Our C++ implementation is available online.\footnote{\url{https://github.com/wonderren/public_cppmomapf}}

Preliminary versions of this research have previously appeared in~\cite{ren2021multi}.
This article contains a new proof of completeness and  optimality of the proposed approach which applies to all the variants of MO-CBS.
We also conduct a new, comprehensive set of experiments to compare MO-CBS with MOM*, and to analyze the performance of MO-CBS variants.
For the rest of this article, we review related work in Sec.~\ref{sec:related} and formulate the problem in Sec.~\ref{sec:problem}. We first revisit CBS in Sec.~\ref{sec:cbs} and then present the basic version of MO-CBS in Sec.~\ref{sec:mocbs}. Variants of MO-CBS are then presented in Sec.~\ref{sec:variants}. We analyze the properties of MO-CBS in Sec.~\ref{sec:analysis} and show numerical results in Sec.~\ref{sec:result}. Finally, conclusion and future work are presented in Sec.~\ref{sec:conclude}.

\section{Related Work}\label{sec:related}

\subsection{Multi-Objective Path Planning}
Multi-objective ({\it single-agent}) path planning (MOPP) problems aim to find a set of Pareto-optimal paths for the agent between its start location and destination with respect to \emph{multiple} objectives.
MOPP arises in applications such as construction site routing~\cite{soltani2004fuzzy}, hazardous material transportation~\cite{erkut2007hazardous}, and others~\cite{montoya2013multiobjective,xu2021multi}.
One common approach to solve a MOPP is to weight the multiple objectives and transform it to a single-objective problem~\cite{emmerich2018tutorial,ehrgott2005multicriteria}.
The transformed problem can then be solved using a corresponding single-objective algorithm.
This approach has two main drawbacks: First, the choice of the weights for the objectives must be known a-priori and requires in-depth domain knowledge which may not always be possible; Second, it may also require one to repeatedly solve the transformed single-objective problem for different sets of weights in order to capture the Pareto-optimal set which is quite challenging \cite{marler2004survey}.

Additionally, MOPP and its variants have been solved directly via graph search techniques~\cite{moastar,mandow2008multiobjective,ulloa2020simple,ren21mosipp,ren2022mopbd} and evolutionary algorithms~\cite{weise2020momapf} where a Pareto-optimal set of solutions is computed exactly or approximately.
These graph-based approaches provide guarantees about finding all Pareto-optimal solutions but can run slow for hard cases, where the number of Pareto-optimal solutions is large. 
MO-CBS developed in this work belongs to this category of search techniques that directly computes a Pareto-optimal set with quality guarantees.

\subsection{Multi-Agent Path Finding}
Various methods have been developed to compute an optimal solution for MAPF problems including A*-based approaches~\cite{standley2010finding,goldenberg2014enhanced}, subdimensional expansion~\cite{wagner2015subdimensional}, compilation-based solver~\cite{surynek2016efficient}, integer programming-based methods~\cite{lam2019bcp} and Conflict-Based Search (CBS)~\cite{sharon2015conflict}.
In addition, different variants of MAPF have also been considered, such as agents moving with different speeds~\cite{andreychuk2022multi,ren2021loosely}, visiting multiple target locations along the path~\cite{ren2021ms,ren22cbss}, pickup-and-delivery tasks~\cite{ma2019lifelong,ma2016optimal}, satisfying kinodynamic constraints~\cite{cohen2019optimal}.
However, all these methods optimize a single objective.

For MOMAPF, heuristic approaches and evolutionary algorithms~\cite{weise2020momapf,weise2021scalable,mai2020modeling,saha2021uavs} have been leveraged to solve variants of MOMAPF.
For example in~\cite{weise2020momapf}, agents are not allowed to wait in place and collisions between the agent's paths are modeled in one of the objectives and not as a constraint.
Recently, in our prior work, by leveraging M*~\cite{wagner2015subdimensional}, we developed Multi-Objective M* (MOM*) \cite{ren2021subdimensional} to solve the MOMAPF with solution quality guarantees.
Similarly to M*, MOM* begins by planning for each agent independently and couples agents for planning by searching in their joint configuration space only when two agents are in conflict with each other.
In addition, MOM* also leverages the dominance principle from the multi-objective optimization literature to compare two partial solutions in order to find all Pareto-optimal solutions.
In this work, we compare the proposed MO-CBS with MOM* in various maps, and the result shows that MO-CBS achieves higher success rates within a runtime limit than MOM* in several maps.

\section{Problem Formulation}\label{sec:problem}

Let index set $I=\{1,2,\dots,N\}$ denote a set of $N$ agents. 
All agents move in a workspace represented as a finite graph $G=(V,E)$, where the vertex set $V$ represents all possible locations of agents and the edge set $E \subseteq V \times V$ denotes the set of all the possible actions that can move an agent between a pair of vertices in $V$.
An edge between two vertices $u, v \in V$ is denoted as $(u,v)\in E$ and the cost of an edge $e \in E$ is a $M$-dimensional positive vector: cost$(e) \in (0,\infty)^M$ with $M$ being a positive integer and each component in cost$(e)$ being a finite number.

In this work, we use a superscript $i,j \in I$ over a variable to represent the specific agent that the variable belongs to (e.g. $v^i\in V$ means a vertex with respect to agent $i$). 
Let $\pi^i(v^i_{1}, v^i_{\ell})$ be a path that connects vertices $v^i_{1}$ and $v^i_{\ell}$ via a sequence of vertices $(v^i_{1},v^i_{{2}},\dots,v^i_{\ell})$ in the graph $G$. 
Let $g^i(\pi^i(v^i_{1}, v^i_\ell))$ denote the $M$-dimensional cost vector associated with the path, which is the sum of the cost vectors of all the edges present in the path, $i.e.$, $g^i(\pi^i(v^i_{1}, v^i_{\ell})) = \Sigma_{j=1,2,\dots,{\ell-1}} \text{cost}(v^i_{{j}}, v^i_{{j+1}})$. 
 
All agents share a global clock and all the agents start their paths at time $t=0$. Each action, either wait or move, for any agent requires one unit of time. 
Any two agents $i,j \in I$ are said to be in conflict if one of the following two cases happens. The first case is a ``vertex conflict'' where two agents occupy the same location at the same time. The second case is an ``edge conflict'' where two agents move through the same edge from opposite directions between times $t$ and $t+1$ for some $t$.   

Let $v_o^i, v^i_f \in V$ respectively denote the initial location and the destination of agent $i$.
Without loss of generality, to simplify the notations, we also refer to a path $\pi^i(v^i_{o}, v^i_{f})$ for agent $i$ between its initial and final locations as simply $\pi^i$. Let $\pi=(\pi^1,\pi^2,\dots, \pi^N)$ represent a joint path for all the agents, which is also called a solution. The cost vector of this solution is defined as the vector sum of the individual path costs over all the agents, $i.e.$, $g(\pi) = \Sigma_i g^i(\pi^i)$.

To compare any two solutions, we compare the cost vectors corresponding to them.
Given two vectors $a$ and $b$, $a$ is said to \textit{dominate} $b$ if every component in $a$ is no larger than the corresponding component in $b$ and there exists at least one component in $a$ that is strictly less than the corresponding component in $b$.
Formally, it is defined as:
\begin{definition}[Dominance \cite{mandow2008multiobjective}]
	Given two vectors $a$ and $b$ of length $M$, $a$ dominates $b$, notationally $a \succeq b$, if and only if $a(m) \leq b(m)$, $\forall m \in \{1,2,\dots,M\}$ and $a(m) < b(m)$, $\exists m \in \{1,2,\dots,M\}$.\footnote{The definition is also referred to as ``Pareto Dominance'' in the literature (e.g.~\cite{emmerich2018tutorial}). To simplify presentation, we call it ``dominance'' in this work.}
\end{definition}
Any two solutions are non-dominated with respect to each other if the corresponding cost vectors do not dominate each other.
A solution $\pi$ is non-dominated with respect to a set of solutions $\Pi$, if $\pi$ is not dominated by any $\pi' \in \Pi$.
Among all conflict-free ($i.e.$ feasible) solutions, the set of all non-dominated solutions is called the {\it Pareto-optimal} set.
In this work, we aim to find all \emph{cost-unique} Pareto-optimal solutions, $i.e.$ any maximal subset of the Pareto-optimal set, where any two solutions in this subset do not have the same cost vector.

\section{A Brief Review of Conflict-based Search}\label{sec:cbs}


\subsection{Conflicts and Constraints}
Let $(i,j,v^i,v^j,t)$ denote a {\it conflict} between agent $i,j \in I$, with $v^i,v^j \in V$ representing the vertex of agent $i,j$ at time $t$. In addition, to represent a vertex conflict, $v^i$ is required to be the same as $v^j$ and they both represent the location where vertex conflict happens. To represent an edge conflict, $v^i, v^j$ denote the adjacent vertices that agent $i,j$ swap at time $t$ and $t+1$.
Given a pair of individual paths $\pi^i, \pi^j$ of agent $i,j \in I$, to detect a conflict, let $\Psi(\pi^i,\pi^j)$ represent a conflict checking function that returns either an empty set if there is no conflict, or the first conflict detected along $\pi^i, \pi^j$.

A conflict $(i,j,v^i,v^j,t)$ can be avoided by adding a corresponding constraint to the path of either agent $i$ or agent $j$. Specifically,
let $\omega^i = (i,u^i_a,u^i_b,t), u^i_a,u^i_b \in V$ denote a {\it constraint} belonging to agent $i$, which is generated from conflict $(i,j,v^i,v^j,t)$ with $u^i_a=v^i, u^i_b=v^j$ and the following specifications:
\begin{itemize}
	\item If $u^i_a = u^i_b$, $\omega^i$ forbids agent $i$ from entering $u^i_a$ at time $t$ and is named as a {\it vertex constraint} as it corresponds to a vertex conflict.
	\item If $u^i_a \neq u^i_b$, $\omega^i$ forbids agent $i$ from moving from $u^i_a$ to $u^i_b$ between time $t$ and $t+1$ and is named as an {\it edge constraint} as it corresponds to an edge conflict.
\end{itemize}
Given a set of constraints $\Omega$, let $\Omega^i \subseteq \Omega$ represent the subset of all constraints in $\Omega$ that belong to agent $i$, and clearly $\Omega = \bigcup_{i\in I} \Omega^i$. Additionally, a path $\pi^i$ is said to be {\it consistent} with respect to $\Omega$ if $\pi^i$ satisfies every constraint in $\Omega^i$. A joint path $\pi$ is consistent with respect to $\Omega$ if every individual path $\pi^i \in \pi$ is consistent.

\subsection{Two Level Search}
CBS is a two level search algorithm. The low-level search in CBS is a single-agent path planner that plans an optimal ($i.e.$ minimum cost) and consistent path for an agent $i$ with respect to the set of constraints in  $\Omega^i$. If there is no consistent path for agent $i$ given $\Omega^i$, the low-level search reports failure.

For the high-level search, CBS constructs a search tree $\mathcal{T}$ with each tree node $P$ containing:
\begin{itemize}
	\item $\pi=(\pi^1,\pi^2,\dots,\pi^N)$, a joint path that connects the start and destination vertices of agents respectively,
	\item $g$, a scalar cost value associated with $\pi$ and
	\item a set of constraints $\Omega$.
\end{itemize}
The root node $P_o$ of $\mathcal{T}$ has an empty set of constraints $\Omega_o=\emptyset$ and the corresponding joint path $\pi_o$ is constructed by the low-level search for every agent respectively with $\Omega^i_o=\emptyset$.

To ``expand'' a high-level search node $P_k=(\pi_k, g_k, \Omega_k)$, where subscript $k$ identifies a specific node, conflict checking $\Psi(\pi^i_k,\pi^j_k)$ is computed for any pair of individual paths in $\pi_k$ with $i,j \in I, i\neq j$. If there is no conflict detected, a solution is found and the algorithm terminates.
Otherwise, for the first detected conflict $(i,j,v^i,v^j,t)$, CBS conducts the following procedures to resolve it.
First, CBS \emph{splits} the detected conflict to generate two constraints $\omega^i=(i,u^i_a=v^i,u^i_b=v^j,t)$ and $\omega^j=(j,u^j_a=v^j,u^j_b=v^i,t)$.
Second, CBS generates two corresponding nodes $P_{l^i}=(\pi_{l^i}, g_{l^i}, \Omega_{l^i}), P_{l^j}=(\pi_{l^j}, g_{l^j}, \Omega_{l^j}) $, where $ \Omega_{l^i} =  \Omega_{k} \cup \{w^i\} $ and $ \Omega_{l^j} =  \Omega_{k} \cup \{w^j\} $.
Finally, CBS lets $\pi_{l^i}\gets\pi_k$ (and $\pi_{l^j}\gets\pi_k$) and then updates the individual path $\pi^i$ in $\pi_{l^i}$ (and $\pi^j$ in $\pi_{l^j}$) by calling the low-level search for agent $i$ (and $j$) with the set of constraints $\Omega_{l^i}$ (and $\Omega_{l^j}$ respectively). If the low-level search fails to find a consistent path for $i$ (or $j$), node $P_{l^i}$ (or $P_{l^j}$) is discarded.

After conflict resolving, CBS inserts generated nodes into OPEN, which is a priority queue containing all candidate high-level nodes. 
CBS solves a (single-objective) MAPF problem to optimality by iteratively selecting candidate node from OPEN with the smallest $g$ cost, detect conflicts, and then either claims success (if not conflict detected) or resolves the detected conflict which generates new candidate nodes.

Intuitively, from the perspective of the search tree $\mathcal{T}$ constructed by CBS, OPEN contains all leaf nodes in $\mathcal{T}$. In each iteration of the high-level search, a leaf node $P_k$ is selected and checked for conflict. CBS either claims success if paths in $P_k$ are conflict-free or generates new leaf nodes.

\section{Multi-objective Conflict-based Search}\label{sec:mocbs}

The proposed Multi-Objective Conflict-Based Search (MO-CBS) is described in Alg.~\ref{alg:mocbs} and visualized in Fig.~\ref{fig:mocbs}.
MO-CBS generalizes CBS and has the following {\it key features} to handle multiple objectives.

\begin{algorithm}[tb]
	\caption{Pseudocode for MO-CBS, {\color{blue}MO-CBS-t}}\label{alg:mocbs}
	\begin{algorithmic}[1]
		\State{\textit{Initialization()}}
		\State{$\mathcal{C}$ $\gets \emptyset$}
		\While{OPEN not empty} 
		\State{$P_k=(\pi_k,\vec{g}_k,\Omega_k) \gets$ OPEN.pop() }
		\State{\color{blue}{// $P_k=(\pi_k,\vec{g}_k,\Omega_k) \gets$ OPEN.pop-tree-by-tree() } }
		\State{\textbf{if} \textit{Filter}($P_k$) \textbf{then continue}}\Comment{End of iteration}
		\State{\textbf{if} no conflict detected in $\pi_k$ \textbf{then}}
		\State{\indent {\it Update}($P_k$)} 
		\State{\indent add $\vec{g}_k$ to $\mathcal{C}$} 
		\State{\indent \textbf{continue} }\Comment{End of iteration}
		\State{$\Omega \gets $ split detected conflict}
		\ForAll{$\omega^i \in \Omega$}
		\State{$\Omega_l = \Omega_k \cup \{\omega^i\}$}
		\State{$\Pi^i_* \gets$ \textit{LowLevelSearch}($i$, $\Omega_l$) }
		\ForAll{$\pi^i_* \in \Pi^i_*$}
		\State{$\pi_l \gets \pi_k$}
		\State{Replace $\pi^i_l$ (in $\pi_l$) with $\pi^i_*$}
		\State{$\vec{g}_l \gets$ compute path cost $\pi_l$}
		\State{$P_l=(\pi_l,\vec{g}_l,\Omega_l)$}
		\State{\textbf{if} not \textit{Filter}($P_l$) \textbf{then}}
		\State{\indent add $P_l$ to OPEN}
		\EndFor
		\EndFor
		\EndWhile \label{}
		\State{\textbf{return} $\mathcal{C}$}
	\end{algorithmic}
\end{algorithm}

\begin{figure*}[htbp]
	\centering
	\includegraphics[width=\linewidth]{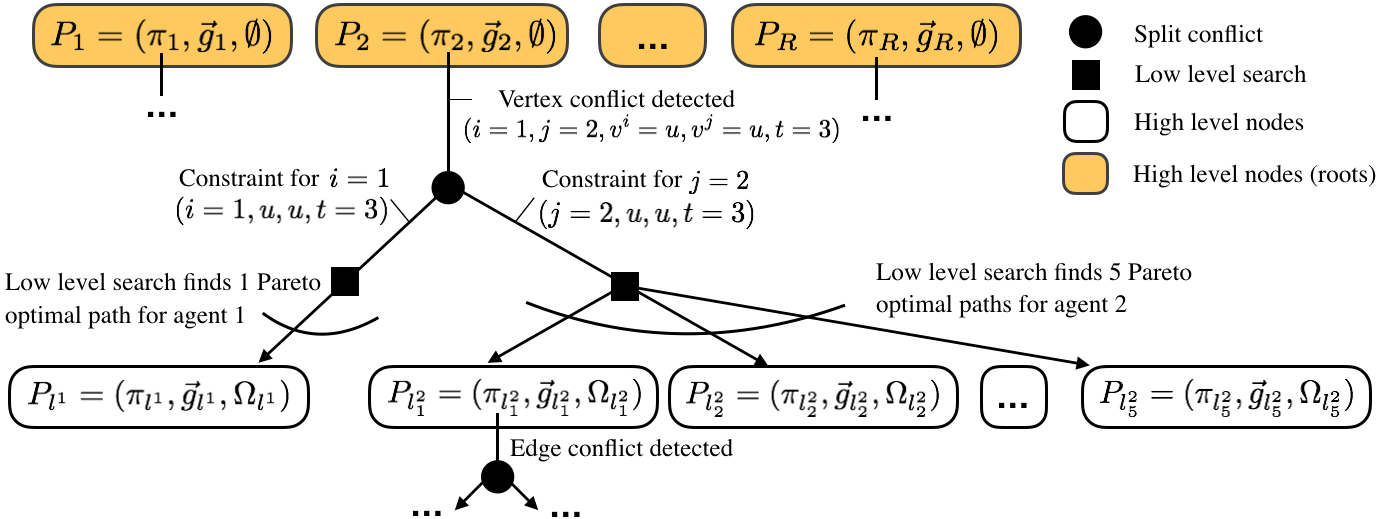}
	\caption{An illustration of the search process of MO-CBS.
	MO-CBS initializes multiple root nodes. MO-CBS iteratively selects a candidate high-level node with a non-dominated cost vector from OPEN, splits the detected conflict to generate constraints, conducts low-level search subject to those constraints, and generates new high-level nodes.
	}
	\label{fig:mocbs}
\end{figure*}

\subsection{Initialization}\label{sec:init}

In MO-CBS, to initialize OPEN (line 1 in Alg.~\ref{alg:mocbs}), a single-agent multi-objective planner (such as NAMOA*-dr~\cite{pulido2015dimensionality}) is used for each agent $i \in I$ separately to compute all cost-unique Pareto-optimal paths, $\Pi^i_o$, for agent $i$. 
A set of joint paths $\Pi_o$ is generated by taking the combination of $\Pi_o^i, \forall i\in I$, 
$i.e.$ $\Pi_o=\{\pi_o | \pi_o=(\pi^1_o,\pi^2_o,\dots,\pi^N_o), \pi^i_o \in \Pi^i_o, \forall i \in I \}$. Clearly, the size of $\Pi_o$ is $|\Pi_o| = |\Pi^1_o|\times|\Pi^2_o|\times\dots\times|\Pi^N_o|$. For each $\pi_o \in \Pi_o$, a corresponding high-level node containing (i) $\pi_o$, (ii) the cost vector associated with $\pi_o$ and (iii) an empty constraint set, is generated and added into OPEN.
Intuitively, while the original CBS initializes a single root node and a single search tree $\mathcal{T}$, MO-CBS initializes a number of $R=|\Pi_o|$ root nodes and a ``search forest'' $\mathcal{T}_r, r \in \{1,2,\dots,R\}$ where each tree $\mathcal{T}_r$ corresponds to a root node.\footnote{The idea of using a CBS-like search forest to solve multi-agent path planning problems have also been investigated in~\cite{honig2018conflict,greshler2021cooperative}.}

In this work, let $\mathcal{C}^*$ denote the Pareto-optimal front of the given problem instance: the set of cost vectors corresponding to the Pareto-optimal set.
Let $\mathcal{C}$ denote a set of cost vectors, where each cost vector corresponds to a conflict-free joint path ($i.e.$ solution) that is found during the search.
$\mathcal{C}$ is initialized to be an empty set (line 2).

\subsection{Finding a Solution}

For every search iteration in MO-CBS (lines 3-21), a high-level node $P_k$ with non-dominated cost vectors among all nodes in OPEN is popped.\footnote{In practice, the lexicographic order of cost vectors is often used to prioritize nodes in OPEN~\cite{ulloa2020simple,pulido2015dimensionality,ren2022mopbd} and it can guarantee that every popped node has a non-dominated cost vector among all nodes in OPEN. This work follows this common practice. Other types of prioritization for the candidates in OPEN can also be used within the MO-CBS framework.}
The popped node $P_k=(\pi_k,\vec{g}_k,\Omega_k)$ is first checked for dominance in procedure \textit{Filter} (line 6), where $\vec{g}_k$ is compared with each cost vector in $\mathcal{C}$.
If there exists a vector $\vec{g}\in\mathcal{C}$ such that $\vec{g}\leq\vec{g}_k$ ($i.e.$ every component in $\vec{g}$ is no larger than the corresponding component in $\vec{g}_k$),\footnote{Note that $\vec{g}\leq\vec{g}_k$ is equivalent to (i) $\vec{g}$ is not dominated by $\vec{g}_k$ and (ii) $\vec{g}$ is also not equal to $\vec{g}_k$.} then node $P_k$ cannot lead to a cost-unique Pareto-optimal solution and is thus discarded ($i.e.$ filtered), and the current search iteration ends.
With the \textit{Filter} procedure, each vector in set $\mathcal{C}$ is guaranteed to be unique.

If $\pi_k$ is conflict-free (lines 7-10), a solution node ($i.e.$ a high-level node containing a solution) is identified, and the cost vector $\vec{g}_k$ is first used to update $\mathcal{C}$ in procedure \textit{Update} and then added to $\mathcal{C}$.
The purpose of \textit{Update} is to ensure that an existing cost vector in $\mathcal{C}$ is removed if it is dominated by $\vec{g}_k$.
Specifically, \textit{Update}($P_k$) uses the cost vector $\vec{g}_k$ in $P_k$ to compare with all existing solution cost vectors (that have already been found during the search) in $\mathcal{C}$, and if $\vec{g}_k \succeq \vec{g}, \vec{g}\in \mathcal{C}$, then $\vec{g}$ is removed from $\mathcal{C}$. (Note that $\vec{g}_k$ cannot be equal to $\vec{g}$, since $\vec{g}_k$ would have been discarded in \textit{Filter} otherwise.)
This \textit{Update} procedure is necessary due to the fact that a search forest $\mathcal{T}_r, r \in \{1,2,\dots,R\}$ (rather than a single search tree) is constructed by MO-CBS. When a solution is found, it is not guaranteed to be Pareto-optimal.
When a new solution $\pi_k$ (in high-level node $P_k$) is found, \textit{Update}($P_k$) removes existing solution cost vectors in $\mathcal{C}$ with dominated cost vectors.
As a result, when the algorithm terminates, $\mathcal{C}$ is guaranteed to be the same as the Pareto-optimal front $\mathcal{C}^*$. Additionally, for each $\vec{g} \in \mathcal{C}$, there exists a corresponding solution $\pi$ that is found during the search. Thus, when Alg.~\ref{alg:mocbs} terminates with $\mathcal{C}=\mathcal{C}^*$, all cost-unique Pareto-optimal solution are found.

For readers that are familiar with CBS: while CBS terminates when the first solution is identified, MO-CBS continues to search when a solution is identified and terminates only when OPEN is empty in order to identify all cost-unique Pareto-optimal solutions. 

\subsection{Conflict Resolution}\label{sec:conflict_reso}
When a node $P_k=(\pi_k,\vec{g}_k,\Omega_{k})$ is popped from OPEN, if $\pi_k$ contains a conflict, just as in CBS, the detected conflict is split into two constraints and a new set of constraints $\Omega_l$ is generated correspondingly. Given an agent $i$ and a constraint set $\Omega_{l}$, the low-level search (which is explained next) is invoked to compute individual Pareto-optimal paths that are consistent with respect to $\Omega_{l}$ for agent $i$.

Given $\Omega_{l}$ and an agent $i$, while in CBS, only one individual optimal path for agent $i$ (that is consistent with $\Omega_{l}$) is computed, in MO-CBS, there can be multiple consistent Pareto-optimal individual paths for agent $i$.
To find all of them, the low-level search employs a single-agent multi-objective planner (Sec.~\ref{mocbs:sec:low_level}) to search a time-augmented graph $G^{t} = (V^{t},E^{t}) = G \times \{0,1,\dots,T\}$, where each vertex in $v \in V^{t}$ is defined as $v=(u,t), u\in V, t \in \{0,1,\dots,T\}$ and $T$ is a pre-defined time horizon (a large positive integer). Edges within $G^{t}$ is represented as $E^{t}= V^{t} \times V^{t}$ where $(u_1,t_1),(u_2,t_2)$ is connected in $G^t$ if $(u_1,u_2) \in E$ and $t_2=t_1 + 1$. Wait in place is also allowed in $G^{t}$ ($i.e.$ $(u,t_1),(u,t_1+1), u \in V$ is connected in $G^t$). In addition, all vertices and edges in $G^t$ that correspond to vertex constraints and edge constraints in $\Omega^i_l \subseteq \Omega_l$ are removed from the time augmented graph $G^t$.

The low-level search guarantees to return a set of consistent Pareto-optimal individual paths $\Pi^i_*$ for agent $i$ subject to the given constraint set.
For each path $\pi^i_* \in \Pi^i_*$ computed by the low-level search, a corresponding joint path $\pi_l$ is generated by first making a copy of $\pi_k$ and then update the individual path $\pi^i_l$ in $\pi_l$ with $\pi^i_*$ (lines 16-17).
If the cost vector of $\pi_l$ is neither dominated by nor equal to the cost vector of any solution cost vector in $\mathcal{C}$ (line 20), a new node $P_l=(\pi_l,\vec{g}_l,\Omega_l)$ is generated and inserted into OPEN.



\subsection{Relationship to CBS}
With only one objective ($i.e.$ $M=1$), MO-CBS is equivalent to CBS in the following sense.
Dominance between vectors becomes the ``less than'' comparison between scalars and the candidate with the minimum $g$ cost in OPEN is popped in each iteration. When the first solution with the minimum cost $g^*$ is found, all other nodes in OPEN must have a cost value no less than $g^*$, and are thus discarded by the \textit{Filter} procedure, which makes OPEN empty and leads to the termination of MO-CBS.
Additionally, the low-level search returns an individual optimal path for an agent when invoked. Only one root node is generated at the initialization step and there is one corresponding search tree built during the search.

\section{Variants of MO-CBS}\label{sec:variants}

\subsection{Tree-By-Tree Expansion for the High-Level Search}\label{sec:mocbs-t}

In MO-CBS, a node with a non-dominated cost vector is selected from OPEN and expanded (conflict checking and splitting). This expansion strategy has two drawbacks.
First, all root nodes need to be generated so that a non-dominated one can be selected.
Considering an example with ten agents and each agent has ten individual Pareto-optimal paths, MO-CBS needs to generate $10^{10}$ root nodes, which is computationally prohibitive.
The second drawback is that nodes are selected in a ``breadth-first'' manner in a sense that the selected nodes can belong to different trees.
As the number of agents (or objectives) increases, this expansion strategy may lead to a large number of expansions before finding the first solution.

Here we propose a new expansion strategy to bypass these limitations. Let candidates in OPEN be sorted by the tree $\mathcal{T}_r$ they belong to, and let OPEN$_r$ denote the open list that contains only candidate nodes in tree $\mathcal{T}_r, r\in \{1,2,\dots,R\}$.
Clearly, OPEN$=\bigcup_{r\in{1,2,\dots,R}}$OPEN$_r$.
Instead of selecting a non-dominated node in OPEN as MO-CBS does, here, only nodes in OPEN$_1$ are considered for selection at first.
The selected node is then expanded in the same manner as MO-CBS does.
As any newly generated nodes belong to $\mathcal{T}_1$, these nodes must be inserted into OPEN$_1$.
Only when OPEN$_1$ depletes, the algorithm then selects candidates from OPEN$_2$ (and then OPEN$_3$, and so on) for expansion.
The algorithm terminates when OPEN$_R$ is depleted. We denote MO-CBS with such a ``tree-by-tree'' (abbreviated as ``-t'') node selection strategy as MO-CBS-t.
MO-CBS-t enables \emph{on-demand} generation of roots and performs a ``depth-first'' like search by exhaustively examining one tree after another.
This allows MO-CBS-t to start the search without initializing all the roots (which is verified in Sec.~\ref{mocbs:sec:result:high-level}).

\subsection{Different Low-Level Planners}\label{mocbs:sec:low_level}
MO-CBS is a search framework in a sense that different low-level planners can be used within the framework, as long as the low-level planner can find all individual Pareto-optimal paths in a time-augmented graph $G^t$ as described in Sec.~\ref{sec:conflict_reso}.

Among the existing single-agent multi-objective search algorithms, NAMOA*~\cite{mandow2008multiobjective} is a popular A*-like multi-objective planner.
NAMOA*-dr~\cite{pulido2015dimensionality} is an improved version of NAMOA* with the so-called ``dimensionality reduction'' (-dr) technique.
Both NAMOA* and NAMOA*-dr can handle an arbitrary number of objectives.
Recently, NAMOA*-dr is further expedited by BOA*~\cite{ulloa2020simple} when there are only two objectives.
We refer the reader to the BOA* paper~\cite{ulloa2020simple} for a detailed discussion about the technical difference between those algorithms.

All these algorithms can be applied to search the aforementioned time-augmented graph $G^t$ and be used as the low-level planner of MO-CBS.
We use notation BOA*-st and NAMOA*-dr-st (-st stands for space-time) to indicate that the planner is applied to the time-augmented graph.
In this work, to handle an arbitrary number of objectives, we use NAMOA*-dr-st as the low-level planner of MO-CBS and denote the corresponding algorithm MO-CBS-n.
When there are only two objectives, we use BOA*-st as the low-level planner of MO-CBS and denote the corresponding algorithm MO-CBS-b.
If the aforementioned tree-by-tree expansion strategy is used, we add ``-t'' to denote the corresponding variant (e.g. MO-CBS-tb, MO-CBS-tn).

\section{Analysis}\label{sec:analysis}

\subsection{Pareto-optimality}
Let $\Pi_*$ denote the set of all Pareto-optimal (solution) joint paths for a given MOMAPF problem instance.
Note that for two solutions $\pi, \pi' \in \Pi_*$, it's possible that their cost vectors $\vec{g}(\pi),\vec{g}(\pi')$ are the same.
At any time of the search, define $\Pi_* |\mathcal{C} := \{\pi : \pi \in \Pi_*, \vec{g}(\pi) \notin \mathcal{C}\}$. Intuitively, $\Pi_* |\mathcal{C}$ is the subset of $\Pi_*$ whose cost vectors have not yet been included in $\mathcal{C}$ during the search.
Additionally, ``expanding'' a high-level node means checking for conflicts and splitting the detected conflict as aforementioned in Sec.~\ref{sec:mocbs}.

\vspace{2mm}
\begin{definition}[$CV set$]\label{def:CV-set}
	Given a high-level node $P=(\pi,\vec{g},\Omega)$, let $CV(P)$ be the set of all joint paths that are (i) consistent with $\Omega$, and (ii) conflict-free ($i.e.$ valid).
\end{definition}

Correspondingly, if $\pi' \in CV(P)$, we say $P$ \emph{permits} $\pi'$.
Intuitively, each joint path in $CV(P)$ is a (conflict-free) solution joint path that satisfies all constraints in $\Omega$.

\vspace{2mm}

\begin{definition}\label{def:permit_node}
For each $\pi_* \in \Pi_* | \mathcal{C}$, let $\mathcal{P}(\pi_*)$ denote a high-level search node $(\pi,\vec{g},\Omega)$ such that (i) $\mathcal{P}(\pi_*)$ permits $\pi_*$ and (ii) for each agent $i\in I$, $\vec{g}(\pi^i) \leq \vec{g}(\pi_*^i)$.
\end{definition}

Correspondingly, if a node $P_k$ satisfies Def.~\ref{def:permit_node} for some $\pi_* \in \Pi_* | \mathcal{C}$, we say $P_k$ is a $\mathcal{P}$-node of $\pi_*$.

\vspace{2mm}
\begin{corollary}\label{corollary:leq_cost}
For a $\pi_* \in \Pi_* | \mathcal{C}$ and a corresponding $\mathcal{P}$-node of $\pi_*$, which is denoted as $(\pi,\vec{g},\Omega)$, we have $\vec{g}(\pi) \leq \vec{g}(\pi_*)$.
\end{corollary}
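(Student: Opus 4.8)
The plan is to obtain this inequality directly from the additive structure of the cost vector, so the corollary will fall out as an essentially immediate consequence of condition (ii) in the definition of a $\mathcal{P}$-node. The only ingredients needed are the definition of the joint-path cost as a component-wise vector sum, $\vec{g}(\pi) = \sum_{i\in I} \vec{g}(\pi^i)$ (and likewise $\vec{g}(\pi_*) = \sum_{i\in I}\vec{g}(\pi_*^i)$), together with the fact that the relation $\leq$ used throughout is the component-wise partial order, as clarified in the \textit{Filter} discussion.

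Concretely, I would proceed in three short steps. First I would note that for the $\mathcal{P}$-node $(\pi,\vec{g},\Omega)$ the stored cost vector agrees with the cost of its joint path, so it suffices to compare $\vec{g}(\pi)$ with $\vec{g}(\pi_*)$. Second, I would invoke condition (ii) of Definition~\ref{def:permit_node}, which supplies, for every agent $i\in I$, the per-agent component-wise inequality $\vec{g}(\pi^i)\leq \vec{g}(\pi_*^i)$. Third, I would sum these inequalities over all agents: since component-wise $\leq$ is preserved under vector addition, fixing any coordinate $m\in\{1,\dots,M\}$ and adding the scalar inequalities $\vec{g}(\pi^i)(m)\leq\vec{g}(\pi_*^i)(m)$ over $i\in I$ yields $\vec{g}(\pi)(m)\leq\vec{g}(\pi_*)(m)$, and as $m$ was arbitrary this is exactly $\vec{g}(\pi)\leq\vec{g}(\pi_*)$.

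There is no genuine obstacle here; the result is a one-line computation. The only point requiring a moment of care is to recognize that the relevant relation is the (non-strict) component-wise order $\leq$ rather than the strict dominance relation $\succeq$: it is precisely the weaker $\leq$ that is both furnished by condition (ii) and asserted in the conclusion, and it is this relation that behaves additively under summation over agents. I would also remark that condition (i) of Definition~\ref{def:permit_node} (that the node permits $\pi_*$) plays no role in establishing the cost inequality, so the entire argument rests on condition (ii) and the linearity of $\vec{g}(\cdot)$ in the individual paths.
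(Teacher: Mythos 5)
Your argument is correct and is exactly the reasoning the paper intends: the corollary is stated without proof precisely because it follows immediately by summing the per-agent inequalities of condition (ii) in Definition~\ref{def:permit_node} over all agents, using the additivity $\vec{g}(\pi)=\sum_{i\in I}\vec{g}(\pi^i)$ and the fact that the component-wise order is preserved under vector addition. Your remark that condition (i) plays no role is also accurate.
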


\vspace{2mm}
\begin{lemma}\label{lem:never_filtered}
	During the search iterations, for any $\pi_* \in \Pi_*|\mathcal{C}$, if $\mathcal{P}(\pi_*)$ exists and is popped from OPEN for expansion, $\mathcal{P}(\pi_*)$ will not be filtered in the procedure \textit{Filter}.
\end{lemma}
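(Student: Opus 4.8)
The plan is to argue by contradiction, combining Corollary~\ref{corollary:leq_cost} with the defining property of $\Pi_*|\mathcal{C}$ and the Pareto-optimality of $\pi_*$. First I would fix the node $\mathcal{P}(\pi_*) = (\pi,\vec{g},\Omega)$ at the moment it is popped from OPEN, noting that its stored cost $\vec{g}$ equals $\vec{g}(\pi)$ by construction of a high-level node, and that $\vec{g}(\pi) \leq \vec{g}(\pi_*)$ holds component-wise by Corollary~\ref{corollary:leq_cost}.

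Next I would recall what \emph{Filter} actually tests: a popped node with cost $\vec{g}$ is discarded precisely when there exists some $\vec{g}' \in \mathcal{C}$ with $\vec{g}' \leq \vec{g}$ (component-wise). So I would assume, for contradiction, that $\mathcal{P}(\pi_*)$ is filtered, which hands me a vector $\vec{g}' \in \mathcal{C}$ satisfying $\vec{g}' \leq \vec{g} = \vec{g}(\pi)$. Transitivity of the component-wise order then yields $\vec{g}' \leq \vec{g}(\pi_*)$, and this single inequality is the crux from which the desired contradiction flows.

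The argument would then split on how $\vec{g}'$ relates to $\vec{g}(\pi_*)$, since $\vec{g}' \leq \vec{g}(\pi_*)$ forces either $\vec{g}' = \vec{g}(\pi_*)$ or $\vec{g}' \succeq \vec{g}(\pi_*)$. In the equality case, $\vec{g}(\pi_*) = \vec{g}' \in \mathcal{C}$ directly contradicts the membership $\pi_* \in \Pi_*|\mathcal{C}$, whose very definition requires $\vec{g}(\pi_*) \notin \mathcal{C}$. In the strict case, I would invoke the standing fact that every cost vector in $\mathcal{C}$ is the cost of a conflict-free (feasible) solution already found during the search; such a feasible solution would strictly dominate $\pi_*$, contradicting $\pi_* \in \Pi_*$. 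Both branches end in contradiction, so $\mathcal{P}(\pi_*)$ cannot be filtered.

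I expect the main obstacle to be bookkeeping rather than depth: I must treat the two cases separately (the equality case appeals to the definition of $\Pi_*|\mathcal{C}$, while the domination case appeals to Pareto-optimality), and I must explicitly use that members of $\mathcal{C}$ are not merely cost vectors but are realized by genuine feasible solutions recorded during the search — without that, the Pareto-optimality contradiction in the strict case would not go through. It is also worth double-checking that $\vec{g}(\pi)$ and $\vec{g}(\pi_*)$ are compared as full (summed-over-agents) cost vectors, which is exactly what Corollary~\ref{corollary:leq_cost} supplies from the per-agent inequalities in Definition~\ref{def:permit_node}.
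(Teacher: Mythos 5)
Your proposal is correct and follows essentially the same route as the paper's proof: both argue by contradiction, obtain a vector $\vec{g}'\in\mathcal{C}$ with $\vec{g}'\leq\vec{g}\leq\vec{g}(\pi_*)$, and then rule out strict dominance via the Pareto-optimality of $\pi_*$ and equality via the definition of $\Pi_*|\mathcal{C}$. The only cosmetic difference is that you present the two outcomes as an explicit case split, whereas the paper first deduces $\vec{g}(\pi')=\vec{g}(\pi_*)$ from Pareto-optimality and then refutes that equality.
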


\begin{proof}
    We prove this Lemma by contradiction.
    By Def.~\ref{def:permit_node}, node $\mathcal{P}(\pi_*)$ has a cost vector $\vec{g}\leq \vec{g}(\pi_*)$.
    If $\mathcal{P}(\pi_*)$ is removed by the procedure \textit{Filter}, there must exist a feasible solution $\pi'$  with cost vector $\vec{g}(\pi') \in \mathcal{C}$ such that $\vec{g}(\pi') \leq \vec{g}$. 
    Hence, we have $\vec{g}(\pi') \leq \vec{g}(\pi_*)$. This implies $\vec{g}(\pi') = \vec{g}(\pi_*)$ because $\pi$ and $\pi_*$ are feasible solutions, and $\pi_*$ is Pareto-optimal. However, $\vec{g}(\pi') = \vec{g}(\pi_*)$ is not possible because if $\vec{g}(\pi')\in \mathcal{C}$, then by definition, $\pi_* \notin \Pi_*|\mathcal{C}$. Hence proved.
    \hfill$\blacksquare$
\end{proof}

\vspace{2mm}
\begin{lemma}\label{lem:p-node-still-exists}
    Let a $\pi_* \in \Pi_* | \mathcal{C}$ be such that there exists a $\mathcal{P}$-node of $\pi_*$ (denoted as $P_k=(\pi_k,\vec{g}_k,\Omega_k)$) in OPEN, and let $\pi_k$ have conflicts. Then, if $P_k$ is popped from OPEN and expanded (lines 11-21), there still exists a $\mathcal{P}$-node of $\pi_*$ after expansion in OPEN.
\end{lemma}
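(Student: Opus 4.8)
The plan is to show that expanding a $\mathcal{P}$-node $P_k$ of $\pi_*$ either directly yields a child node that is again a $\mathcal{P}$-node of $\pi_*$, or leaves the original $\mathcal{P}$-node guarantee intact through a surviving node. Since $P_k$ is being expanded, $\pi_k$ has a conflict $(i,j,v^i,v^j,t)$, which the algorithm splits into two constraints $\omega^i$ and $\omega^j$ belonging to agents $i$ and $j$ respectively. The key observation I would make first is that $\pi_* \in CV(P_k)$: by Definition~\ref{def:permit_node}, $P_k$ permits $\pi_*$, so $\pi_*$ is conflict-free and consistent with $\Omega_k$. Since $\pi_*$ is itself conflict-free, it cannot contain the conflict $(i,j,v^i,v^j,t)$; hence the individual paths $\pi_*^i$ and $\pi_*^j$ together cannot violate both $\omega^i$ and $\omega^j$ simultaneously. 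Because $\omega^i$ and $\omega^j$ are the two complementary constraints derived from the same conflict, $\pi_*$ must be consistent with at least one of them. Without loss of generality, suppose $\pi_*$ is consistent with $\omega^i$, so $\pi_*^i$ satisfies $\omega^i$ and therefore $\pi_* \in CV(P_{l^i})$ where $\Omega_{l^i} = \Omega_k \cup \{\omega^i\}$.

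Next I would establish that the branch $P_{l^i}$ actually produces a node satisfying Definition~\ref{def:permit_node}. The low-level search for agent $i$ with constraint set $\Omega_{l^i}$ returns the full set $\Pi^i_*$ of cost-unique Pareto-optimal consistent individual paths. Since $\pi_*^i$ is consistent with $\Omega_{l^i}$, there must exist some $\pi^i_* \in \Pi^i_*$ whose cost vector dominates or equals $\vec{g}(\pi_*^i)$, i.e. $\vec{g}(\pi^i_*) \leq \vec{g}(\pi_*^i)$; this is exactly the Pareto-optimality guarantee of the low-level planner. I would then let $P_l$ be the child node formed by replacing the $i$-th component of $\pi_k$ with this $\pi^i_*$. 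I need to verify the two conditions of Definition~\ref{def:permit_node} for $P_l$ with respect to $\pi_*$: (i) that $P_l$ permits $\pi_*$, which follows since $\pi_*$ is consistent with $\Omega_{l^i}$ and conflict-free, so $\pi_* \in CV(P_l)$; and (ii) that for every agent the component cost is no larger than that of $\pi_*$. For agent $i$ this holds by the low-level guarantee $\vec{g}(\pi^i_*) \leq \vec{g}(\pi_*^i)$; for every other agent $m \neq i$ the component is inherited from $P_k$, and since $P_k$ was a $\mathcal{P}$-node we already have $\vec{g}(\pi_k^m) \leq \vec{g}(\pi_*^m)$.

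Finally I would confirm that $P_l$ is genuinely inserted into OPEN rather than discarded. The only way a freshly generated node is dropped is via the \textit{Filter} test on line 20; but $P_l$ satisfies the hypotheses of Lemma~\ref{lem:never_filtered} (it is a $\mathcal{P}$-node of $\pi_* \in \Pi_*|\mathcal{C}$ by the argument above), so it cannot be filtered. Hence $P_l$ is added to OPEN and is a $\mathcal{P}$-node of $\pi_*$ after the expansion, completing the argument. I expect the main obstacle to be the case analysis around which branch $\pi_*$ survives in, and making the cost-dominance bookkeeping for agent $i$ fully rigorous: I must argue carefully that the low-level planner returning a \emph{set} of Pareto-optimal paths necessarily contains one with cost no worse than $\vec{g}(\pi_*^i)$, invoking cost-uniqueness and completeness of the low-level search, and I must be careful that $\pi^i_*$ may differ from $\pi_*^i$ yet still keep $\pi_*$ itself permitted (since $CV(P_l)$ depends only on $\Omega_{l^i}$, not on the stored path $\pi^i_*$). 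Tying the filtering step back to Lemma~\ref{lem:never_filtered} is what seals the conclusion, so I would make sure the $\mathcal{P}$-node status of $P_l$ is fully justified before invoking it.
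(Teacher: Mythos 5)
Your proposal is correct and follows essentially the same route as the paper's proof: the conflict-free $\pi_*$ must satisfy at least one of the two split constraints, the completeness of the low-level planner guarantees a returned individual path with cost componentwise no larger than $\vec{g}(\pi_*^i)$, the remaining agents' costs are inherited from $P_k$, and Lemma~\ref{lem:never_filtered} ensures the resulting $\mathcal{P}$-node is not filtered before insertion into OPEN. The extra care you take about the distinction between the stored path and the permitted path (that $CV$ depends only on the constraint set) is a point the paper leaves implicit, but it does not change the argument.
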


\begin{proof}
    If $\pi_k$ has a conflict between agent $i,j \in I$ (line 11), during the expansion, MO-CBS splits the conflict, generates two constraints $\omega_i$, $\omega_j$ and invokes the low-level planner (line 14) to find individual cost-unique Pareto-optimal paths subject to the new set of constraints $\Omega_k\bigcup \{\omega_i\}$ or $\Omega_k\bigcup \{\omega_j\}$, which results in two sets $\{P_{l^i}\},\{P_{l^j}\}$ of new high-level nodes (line 19).
    Since $\pi_*$ must satisfy at least one of the constraints $\omega_i,\omega_j$, at least one set of nodes $\{P_{l^i}\},\{P_{l^j}\}$ must permit $\pi_*$.
    
    Without losing generality, let $\{P_{l^i}\}$ be a set of nodes that permits $\pi_*$, and let $\Pi_l^i$ denote a set of all individual cost-unique Pareto-optimal path for agent $i$ that is computed by the low-level planner after adding constraint $\omega_i$.
    Note that there is a one-one correspondence between nodes in $\{P_{l^i}\}$ and individual paths in $\Pi_l^i$ (lines 16-19).
    So, there exists at least one individual path $\pi^i_l \in \Pi_l^i$ such that $\vec{g}(\pi^i_l)\leq \vec{g}(\pi^i_*)$ because otherwise $\pi_*^i$ is non-dominated by any solution in $\Pi^i_l$. It means $\pi_*^i$ is a cost-unique Pareto-optimal path that satisfies all constraints in $\Omega_k\bigcup \{\omega_i\}$ and the low-level planner does not find it, which is impossible.
    Let $(\pi_l,\vec{g}_l,\Omega_l) \in \{P_{l^i}\}$ denote the generated high-level node corresponding to $\pi^i_l$, then $\vec{g}(\pi_l^j)=\vec{g}(\pi_k^j)\leq\vec{g}(\pi_*^j), \forall j\in I, j\neq i$ (by lines 16-17 in MO-CBS, and note that $P_k$ is a $\mathcal{P}$-node of $\pi_*$).
    So, there exists a node in $\{P_{l^i}\}$ that is a $\mathcal{P}$-node of $\pi_*$.
    
    Finally, by Lemma~\ref{lem:never_filtered}, $\mathcal{P}$-node of $\pi_*$ cannot be filtered (line 20), and is thus added to OPEN.
    \hfill$\blacksquare$
\end{proof}

\vspace{2mm}
\begin{lemma}\label{lem:permits}
    During any iteration of the algorithm, if $\Pi_* | \mathcal{C}$ is non-empty, then for each $\pi_* \in \Pi_* | \mathcal{C}$, there exists at least one $\mathcal{P}(\pi_*)$ in OPEN.
\end{lemma}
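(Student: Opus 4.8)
The plan is to prove the statement as a loop invariant by induction on the number of high-level search iterations executed by Alg.~\ref{alg:mocbs}, since both $\Pi_*|\mathcal{C}$ and OPEN change only at iteration boundaries. A preliminary observation I would establish first is that, for any Pareto-optimal $\pi_*$, its cost vector $\vec{g}(\pi_*)$ is never \emph{removed} from $\mathcal{C}$ during the \textit{Update} step: \textit{Update} discards a vector $\vec{g}\in\mathcal{C}$ only when $\vec{g}_k \succeq \vec{g}$ for the cost $\vec{g}_k$ of the conflict-free solution just found, and $\vec{g}_k\succeq\vec{g}(\pi_*)$ would contradict the Pareto-optimality of $\pi_*$. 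Consequently, once $\pi_*$ leaves $\Pi_*|\mathcal{C}$ it never returns, and in particular any $\pi_*$ that belongs to $\Pi_*|\mathcal{C}$ at the \emph{end} of an iteration also belonged to it at the \emph{start}. This is what will let me invoke the inductive hypothesis cleanly.

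For the base case (just after \textit{Initialization}), I would fix any $\pi_*=(\pi_*^1,\dots,\pi_*^N)\in\Pi_*|\mathcal{C}$ and note $\mathcal{C}=\emptyset$, so $\Pi_*|\mathcal{C}=\Pi_*$. Since $\Pi_o^i$ is the set of \emph{all} cost-unique Pareto-optimal individual paths for agent $i$ (ignoring conflicts), the achievable cost $\vec{g}(\pi_*^i)$ is dominated by or equal to that of some $\pi_o^i\in\Pi_o^i$, i.e.\ there is a choice $\pi_o^i$ with $\vec{g}(\pi_o^i)\leq\vec{g}(\pi_*^i)$. These choices can be made independently per agent, and because OPEN is initialized with \emph{every} combination $\Pi_o$, the corresponding root node $P_o$ is present in OPEN. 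Its constraint set is empty, so $P_o$ permits the conflict-free $\pi_*$, and by construction $\vec{g}(\pi_o^i)\leq\vec{g}(\pi_*^i)$ holds for every $i$; hence $P_o$ satisfies Def.~\ref{def:permit_node} and is a $\mathcal{P}$-node of $\pi_*$.

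For the inductive step, assume the invariant holds at the start of an iteration and fix $\pi_*\in\Pi_*|\mathcal{C}$ that is still in $\Pi_*|\mathcal{C}$ at the end; by the preliminary observation it was in $\Pi_*|\mathcal{C}$ at the start, so the hypothesis gives a $\mathcal{P}$-node of $\pi_*$ in OPEN. Let $P_k$ be the popped node. If $P_k$ is not a $\mathcal{P}$-node of $\pi_*$, then some $\mathcal{P}$-node of $\pi_*$ survives untouched in OPEN, since only $P_k$ is removed, no other OPEN entry is deleted, and the $\mathcal{P}$-node property depends only on $\pi_*$, not on $\mathcal{C}$. If $P_k$ \emph{is} a $\mathcal{P}$-node of $\pi_*$, I would rule out the ways it could vanish: by Lemma~\ref{lem:never_filtered} it cannot be filtered; and it cannot be conflict-free, because Corollary~\ref{corollary:leq_cost} gives $\vec{g}(\pi_k)\leq\vec{g}(\pi_*)$, which for feasible $\pi_k$ and Pareto-optimal $\pi_*$ forces $\vec{g}_k=\vec{g}(\pi_*)$, so $\vec{g}(\pi_*)$ would enter $\mathcal{C}$ and eject $\pi_*$ from $\Pi_*|\mathcal{C}$, contradicting the choice of $\pi_*$. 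Hence $\pi_k$ must contain a conflict and $P_k$ is expanded, and Lemma~\ref{lem:p-node-still-exists} guarantees a $\mathcal{P}$-node of $\pi_*$ remains in OPEN afterward.

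The routine bookkeeping of the base case and the three-way case split is straightforward once Lemmas~\ref{lem:never_filtered} and~\ref{lem:p-node-still-exists} and Corollary~\ref{corollary:leq_cost} are in hand; the part demanding the most care is the preliminary claim that $\vec{g}(\pi_*)$ is never deleted from $\mathcal{C}$, which is what synchronizes ``$\pi_*\in\Pi_*|\mathcal{C}$ at the end'' with the same membership ``at the start'' and thereby makes the induction go through. I expect this reconciliation of the evolving set $\mathcal{C}$ with the iteration boundaries, rather than any single algebraic step, to be the main obstacle.
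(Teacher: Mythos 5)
Your proposal is correct and follows essentially the same route as the paper's proof: induction over high-level iterations, using Lemma~\ref{lem:never_filtered} to rule out filtering and Lemma~\ref{lem:p-node-still-exists} to handle the expansion case, with the same three-way case split on the popped node. Your two refinements---the preliminary observation that \textit{Update} never deletes a Pareto-optimal cost vector from $\mathcal{C}$ (so membership in $\Pi_*|\mathcal{C}$ is monotone), and the explicit per-agent choice of $\pi_o^i$ with $\vec{g}(\pi_o^i)\leq\vec{g}(\pi_*^i)$ to verify condition (ii) of Def.~\ref{def:permit_node} in the base case---are welcome extra care on points the paper glosses over, but do not change the argument.
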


\begin{proof}
    We show this Lemma by mathematical induction.
    
    {\it Base case:} During the initialization step of MO-CBS, all individual Pareto-optimal paths of each agent are computed and all possible combinations are enumerated to generate initial joint paths and root nodes. Each root node has an empty constraint set and permits all $\pi_* \in \Pi_*$. Thus, right after initialization ($i.e.$ after line 1), this Lemma holds.
    
    {\it Assumption:} Assume the Lemma holds at the start of the $k$-th iteration of the {\it while} loop of MO-CBS.
    
    {\it Induction:} During the $k$-th iteration of the {\it while} loop, let $P_k=(\pi_k,\vec{g}_k,\Omega_k)$ denote a node that is popped from OPEN (line 4).
    For any $\pi_* \in \Pi_*|\mathcal{C}$, if $P_k$ is not a $\mathcal{P}$-node of $\pi_*$, then by assumption, there must exist another node $P_k'$ in OPEN, which is a $\mathcal{P}$-node of $\pi_*$. Since $P_k'$ is not popped from OPEN during the $k$-th iteration, $P_k'$ is still in OPEN and the Lemma holds.
    Hence, we only need to consider the case where the popped node is a $\mathcal{P}$-node of $\pi_*$.
    By Lemma~\ref{lem:never_filtered}, $P_k$ is not removed during the filtering step (line 6) of the algorithm. Now, $\pi_k$ must either be conflict-free or have conflicts:
    \begin{itemize}
        \item If $\pi_k$ is conflict-free (line 7), since $\vec{g}(\pi_k)\leq \vec{g}(\pi_*)$, $\pi_k$ must also be Pareto-optimal and $\vec{g}(\pi_k) = \vec{g}(\pi_*)$. Since $\vec{g}(\pi_k)$ is added to $\mathcal{C}$ (line 9), by definition, $\pi_*$ does not belong to $\Pi_* | \mathcal{C}$ any more.
        
        \item If $\pi_k$ has a conflict, as shown in Lemma~\ref{lem:p-node-still-exists}, there is still a $\mathcal{P}$-node in OPEN after the expansion of $P_k$.
    \end{itemize}
    Therefore, at the end of the $k$-th iteration of  MO-CBS, the Lemma holds. Hence, proved.
    \hfill$\blacksquare$
\end{proof}

\vspace{2mm}
\begin{theorem}[Pareto-optimality]\label{thm:optimal}
	For a given problem instance, MO-CBS finds the entire Pareto-optimal front $\mathcal{C}^*$, if it exists.
\end{theorem}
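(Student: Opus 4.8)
The plan is to derive the theorem from Lemma~\ref{lem:permits} together with a termination argument and a short antichain observation. The claim $\mathcal{C} = \mathcal{C}^*$ at termination naturally splits into three pieces that I would establish in order: (i) the algorithm terminates, (ii) every Pareto-optimal cost vector ends up in $\mathcal{C}$, and (iii) no spurious (non--Pareto-optimal) vector survives in $\mathcal{C}$. If $\mathcal{C}^*$ is empty (no feasible solution exists) the statement is vacuous, so I would assume at least one conflict-free solution exists.

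First, for termination I would argue that the search forest is finite. Since the time-augmented graph $G^t$ uses a finite horizon $T$ and $G$ itself is finite, there are only finitely many distinct vertex and edge constraints that can ever be generated. Each high-level expansion (lines 11--21) strictly enlarges the constraint set $\Omega$ by one previously-absent constraint, so the depth of every tree $\mathcal{T}_r$ is bounded by the total number of possible constraints. Because each low-level search returns only finitely many individual Pareto-optimal paths, the branching factor at every node is finite as well. Hence each $\mathcal{T}_r$ is finite, there are finitely many roots $R$, and OPEN must be exhausted after finitely many iterations.

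Second, for completeness I would invoke Lemma~\ref{lem:permits} contrapositively. That lemma guarantees that whenever $\Pi_* | \mathcal{C}$ is non-empty, OPEN contains a $\mathcal{P}$-node and is therefore non-empty. Consequently, at termination---when OPEN is empty---$\Pi_* | \mathcal{C}$ must be empty, meaning every Pareto-optimal solution has had its cost vector inserted into $\mathcal{C}$. This yields $\mathcal{C}^* \subseteq \mathcal{C}$. For soundness I would use the \textit{Filter} and \textit{Update} procedures: every vector placed into $\mathcal{C}$ is the cost of a conflict-free (feasible) solution, and \textit{Filter} together with \textit{Update} keep $\mathcal{C}$ an antichain in which no vector dominates or equals another. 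Combining this with $\mathcal{C}^* \subseteq \mathcal{C}$, if some $\vec{g} \in \mathcal{C}$ were not Pareto-optimal it would be dominated by some Pareto-optimal cost vector, which by completeness already lies in $\mathcal{C}$---contradicting the antichain property. Thus every element of $\mathcal{C}$ is Pareto-optimal, giving $\mathcal{C} \subseteq \mathcal{C}^*$ and hence $\mathcal{C} = \mathcal{C}^*$.

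The main obstacle is the soundness direction coupled with making the antichain invariant airtight. The bulk of the completeness argument is already carried by Lemma~\ref{lem:permits}, so the forward-looking difficulty is verifying that \textit{Filter}/\textit{Update} genuinely maintain the antichain property \emph{throughout} the search---in particular that once \textit{Update} removes a dominated vector it can never be reinserted, and that \textit{Filter} blocks any newly generated node whose cost equals or is dominated by a current member of $\mathcal{C}$. I would also spell out the finiteness of the forest carefully, since termination silently depends on the bounded time horizon $T$; these are the steps I expect to require the most care.
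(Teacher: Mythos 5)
Your proposal is correct, and its core is the same as the paper's: the proof of Theorem~\ref{thm:optimal} in the paper is exactly your step (ii) --- invoke Lemma~\ref{lem:permits} contrapositively, so that when OPEN depletes, $\Pi_*|\mathcal{C}$ must be empty and hence $\mathcal{C}^*\subseteq\mathcal{C}$. Where you differ is in scope and decomposition. The paper's proof of this theorem contains \emph{only} that one direction; termination is factored out into a separate Lemma~\ref{lem:terminate_finite_time} (used for the completeness theorem), and the soundness direction $\mathcal{C}\subseteq\mathcal{C}^*$ is never argued inside a proof at all --- it is asserted in the prose describing the \textit{Filter} and \textit{Update} procedures. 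Your explicit antichain argument fills that gap cleanly, and your observation that a removed vector stays blocked follows from transitivity of dominance. Your termination argument also takes a genuinely different route: you bound the forest by counting possible constraints over the finite horizon $T$, whereas the paper's Lemma~\ref{lem:terminate_finite_time} avoids leaning on $T$ and instead uses strict positivity of edge costs plus \textit{Filter} to show only finitely many joint paths with cost not componentwise above some $\vec{g}^*\in\mathcal{C}^*$ can ever be generated --- which is precisely why the paper needs feasibility for termination and concedes that MO-CBS may not terminate on infeasible instances. Your version buys unconditional termination at the price of restricting solutions to horizon $T$; the paper's buys horizon-independence at the price of requiring feasibility. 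You correctly flagged this dependence yourself, so there is no gap, only a different trade-off.
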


\begin{proof}
     During the search of MO-CBS, by Lemma~\ref{lem:permits}, for each $\pi_* \in \Pi_*$, either $\pi_*$ is permitted by some high-level node $\mathcal{P}(\pi_*)$ in OPEN, or $\vec{g}(\pi_*) \in \mathcal{C}$. Therefore, until a Pareto-optimal solution with a cost vector equal to $\vec{g}(\pi_*)$ is added to $\mathcal{C}$, some high-level node $\mathcal{P}(\pi_*)$ will exist in OPEN. MO-CBS terminates only when OPEN depletes, which means all nodes in OPEN are either filtered or expanded. Therefore, MO-CBS will find the entire Pareto-optimal front.
    \hfill$\blacksquare$
\end{proof}

\subsection{Completeness}

A MOMAPF problem instance is \emph{feasible} if there exists at least one feasible ($i.e.$ conflict-free) joint path for all agents.
A MOMAPF problem instance is infeasible otherwise.
An algorithm is \emph{complete} if:
\begin{itemize}
    \item (Statement-1) The algorithm returns a solution in finite time, if the given problem instance is feasible.
    \item (Statement-2) The algorithm reports failure in finite time, if the given problem instance is infeasible.
\end{itemize}
We first consider (Statement-1).

\vspace{2mm}
\begin{lemma}\label{lem:terminate_finite_time}
	MO-CBS terminates in finite time, if the given MOMAPF problem instance is feasible.
\end{lemma}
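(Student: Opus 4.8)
The plan is to prove termination by showing that MO-CBS can only ever build a \emph{finite} search forest, so the \textit{while} loop (lines 3--21) can execute only finitely many iterations. First I would dispatch the low-level search. The low-level planner operates on the time-augmented graph $G^t = G \times \{0,1,\dots,T\}$, which is finite because $G=(V,E)$ is finite and the horizon $T$ is a fixed finite integer; moreover every path in $G^t$ advances time by one unit per edge and so contains at most $T$ edges. Consequently there are only finitely many distinct individual paths in $G^t$, hence the set $\Pi^i_*$ of cost-unique Pareto-optimal paths returned by any call \textit{LowLevelSearch}$(i,\Omega_l)$ is finite, and a complete single-agent planner such as NAMOA*-dr-st or BOA*-st terminates on such a finite graph. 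This yields a uniform bound $b$ on the high-level branching factor, since each expansion produces two constraints and, for each constraint, one child per returned low-level path.

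Next I would bound the depth of each high-level tree $\mathcal{T}_r$. Every constraint has the form $(i,u_a,u_b,t)$ with $i \in I$, $u_a,u_b \in V$ and $t\in\{0,1,\dots,T\}$, so the universe of possible constraints is finite, of cardinality at most $N|V|^2(T+1)$; call this $K$. When a node $P_k=(\pi_k,\vec{g}_k,\Omega_k)$ is expanded, every child $P_l$ has $\Omega_l=\Omega_k\cup\{\omega\}$ with $\omega\notin\Omega_k$, so the constraint sets strictly increase along any root-to-node path. A strictly increasing chain of subsets of a $K$-element set has length at most $K+1$, which bounds the depth of every tree by $K$.

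Combining the two bounds, each tree contains finitely many nodes (branching at most $b$, depth at most $K$). Since initialization enumerates $R=|\Pi_o|=\prod_{i\in I}|\Pi^i_o|$ roots and each $|\Pi^i_o|$ is finite by the same argument that made $\Pi^i_*$ finite (with strictly positive edge costs no Pareto-optimal path repeats a vertex), the number of roots $R$ is finite, and therefore the entire forest $\{\mathcal{T}_r : r\in\{1,\dots,R\}\}$ has finitely many high-level nodes. Each node is inserted into OPEN at most once and, once popped, is either filtered (line 6) or expanded (lines 11--21) but never re-inserted; each iteration performs one conflict check over finite paths, at most one split, and finitely many terminating low-level calls. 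Hence the \textit{while} loop halts after finitely many iterations. Feasibility enters only at the end: it is used to conclude, via Theorem~\ref{thm:optimal}, that the returned $\mathcal{C}=\mathcal{C}^*$ is non-empty, i.e. that a solution is actually reported, thereby establishing Statement-1.

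The step I expect to be the main obstacle is the node-count bound, because it is tempting but incorrect to argue finiteness from the depth bound alone: distinct children generated from the same split can carry the \emph{same} constraint set $\Omega_l$ while differing in their joint paths $\pi_l$ (one per Pareto-optimal low-level path). The depth argument therefore bounds only the length of chains of constraint sets, not the number of nodes, and the proof must additionally invoke the uniform finite branching factor $b$ to close the gap. I would also take care to confirm that the tree-by-tree variant MO-CBS-t inherits termination, since it merely processes the same finite forest one OPEN$_r$ at a time with on-demand root generation.
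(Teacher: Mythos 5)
Your argument is internally coherent, but it rests entirely on reading the ``pre-defined time horizon $T$'' as a hard, fixed finite bound, and that reading is in direct tension with the paper's own framework. Notice that your proof never uses the feasibility hypothesis for termination --- you derive unconditional termination and only invoke feasibility to conclude that the returned set is non-empty. Yet immediately after Theorem~\ref{thm:complete} the paper states explicitly that ``if the given MOMAPF problem instance is infeasible, then MO-CBS may not terminate,'' which is why a separate polynomial-time feasibility check is proposed for Statement-2. If your finite-$G^t$, finite-constraint-universe argument were the intended one, that remark would be false and the feasibility check superfluous. The horizon $T$ is an implementation convenience (``a large positive integer''); for the analysis it must be taken as effectively unbounded, since otherwise the low-level search could miss consistent Pareto-optimal individual paths that require long waits and the Pareto-optimality guarantee (Lemma~\ref{lem:p-node-still-exists}, Theorem~\ref{thm:optimal}) would fail for small $T$. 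Once $T$ is unbounded, both pillars of your proof collapse: the number of individual paths in $G^t$ is no longer finite, and your depth bound $K = N|V|^2(T+1)$ on strictly increasing constraint chains is no longer finite either.

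The paper's proof gets finiteness from feasibility rather than from $T$. Feasibility gives a non-empty, finite Pareto-optimal front $\mathcal{C}^*$; the \textit{Filter} procedure ensures that no expanded node has a cost vector componentwise at least some $\vec{g}^* \in \mathcal{C}^*$; and because every edge cost is strictly positive in every component, only finitely many joint paths have cost vectors escaping that bound (long paths necessarily accumulate large cost). Each conflict split then permanently excludes at least one joint path from being regenerated, so only finitely many iterations can occur. Your observation about the branching factor (multiple children sharing the same $\Omega_l$ but differing in $\pi_l$) is a genuine and well-spotted subtlety, and your argument that each split adds a constraint not already in $\Omega_k$ (since $\pi_k$ is consistent with $\Omega_k$ but violates the new constraint) is correct and worth keeping; but to repair the proof you need to replace the $T$-based finiteness claims with the cost-boundedness argument, which is where the feasibility hypothesis actually does its work.
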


\begin{proof}
    $\mathcal{C}^*$ contains a finite set of Pareto-optimal cost vectors. 
    MO-CBS never expands a high-level node $P$ with a cost vector $\vec{g} \geq \vec{g}^*, \exists \vec{g}^* \in \mathcal{C}^*$, ($i.e.$ every component in $\vec{g}$ is no less than the corresponding component in $\vec{g}^*$), since such a node $P$ is removed by the \emph{Filter} procedure.
    Graph $G$ is finite ($i.e.$ has finite number of vertices and edges). Each edge\footnote{Note that wait in place actions are represented as self-loops in the graph, which are also included in the edge set $E$.} in the graph has cost$(e) \in (0,\infty)^M$. Hence, there are only a finite number of joint paths $\pi$ connecting the starts and destinations of all agents such that $\vec{g}(\pi) \ngeq \vec{g}^{*}, \exists \vec{g}^* \in \mathcal{C}^*$.
    In each search iteration, MO-CBS either identifies a feasible solution (a conflict-free joint path), or detects a conflict and generates new constraints which prevents at least one joint path from being generated (lines 12-19) in subsequent search iterations.
    Hence, MO-CBS terminates in finite time.
    \hfill$\blacksquare$
\end{proof}



\vspace{2mm}
\begin{theorem}[Completeness]\label{thm:complete}
	MO-CBS finds a solution in finite time, if the given MOMAPF problem instance is feasible.
\end{theorem}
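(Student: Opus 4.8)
The plan is to obtain this theorem as an immediate consequence of the two results already in hand: the finite-termination Lemma~\ref{lem:terminate_finite_time} and the Pareto-optimality Theorem~\ref{thm:optimal}. The only extra ingredient needed is the observation that feasibility forces the Pareto-optimal front $\mathcal{C}^*$ to be non-empty, so that ``finding the entire front'' actually amounts to finding at least one solution.

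First I would argue that feasibility implies $\mathcal{C}^* \neq \emptyset$. By definition of feasibility there is at least one conflict-free joint path, so the set of feasible solutions is non-empty and its image under $\vec{g}(\cdot)$ is a non-empty collection of cost vectors. Because $G$ is finite and the low-level search operates on the time-augmented graph $G^t$ with a bounded horizon $T$, only finitely many \emph{distinct} feasible cost vectors can arise; moreover each edge cost lies in $(0,\infty)^M$, so every such vector has finite components. A non-empty set admitting only finitely many distinct cost vectors must contain a non-dominated element (for instance, any vector that is minimal in lexicographic order among the distinct feasible cost vectors). Hence $\mathcal{C}^*$ exists and is non-empty.

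Next I would invoke Lemma~\ref{lem:terminate_finite_time} to conclude that, since the instance is feasible, MO-CBS terminates after finitely many iterations, $i.e.$ OPEN eventually depletes. Finally, by Theorem~\ref{thm:optimal}, upon termination the returned set $\mathcal{C}$ equals the entire Pareto-optimal front $\mathcal{C}^*$, and for each cost vector in $\mathcal{C}$ there is a corresponding conflict-free joint path found during the search. Since $\mathcal{C}^*$ is non-empty by the first step, $\mathcal{C}$ is non-empty at termination, so at least one feasible solution has been reported within the finite number of iterations guaranteed by the termination lemma. This establishes Statement-1.

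I do not expect any serious obstacle, as the heavy lifting was done in the earlier lemmas. The only step requiring a little care is the first one, ruling out pathological situations in which a feasible instance admits no non-dominated cost vector. This is handled by the finiteness of $G$ together with the bounded horizon $T$ (which cap the number of distinct feasible cost vectors) and the strict positivity of edge costs (which keeps the components finite); together these let the set of feasible cost vectors behave like a finite set, for which a non-dominated element always exists.
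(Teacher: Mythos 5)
Your proof is correct and follows essentially the same route as the paper's: invoke Lemma~\ref{lem:terminate_finite_time} for finite termination and Theorem~\ref{thm:optimal} to conclude that a solution is in hand at termination. Your additional first step, showing that feasibility forces $\mathcal{C}^*$ to be non-empty via the finiteness of the set of feasible cost vectors, is a sound elaboration of a detail the paper leaves implicit.
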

\begin{proof}
    By Lemma~\ref{lem:terminate_finite_time}, MO-CBS terminates in finite time.
    By Theorem~\ref{thm:optimal}, MO-CBS finds a solution at termination.
    \hfill$\blacksquare$
\end{proof}

\vspace{1mm}
We now discuss (Statement-2).
If the given MOMAPF problem instance is infeasible, then MO-CBS may not terminate.
To overcome this issue, similar to~\cite{sharon2015conflict}, we can run some feasibility checking before running MO-CBS. Specifically, given a MOMAPF instance, a corresponding MAPF instance is generated by assigning each edge in graph $G$ a (scalar) unit cost value. Then the generated MAPF instance is verified in polynomial time with the method in~\cite{yu2015pebble} to check whether this MAPF instance is feasible.
It's obvious that the given MOMAPF instance is feasible if and only if the generated MAPF instance is feasible.

\vspace{1.5mm}
\noindent\textbf{Remark.}
The definition of completeness in this work is the same as the one in~\cite{sharon2015conflict}.
The proofs are applicable to the basic version of MO-CBS as well as its variants.
Specifically, the variant MO-CBS-t differs from MO-CBS in a sense that it re-orders the expansions during the search, and both theorems still hold.
For MO-CBS variants that use different low-level planners, since those low-level planners are all guaranteed to find all individual cost-unique Pareto-optimal paths, Lemma~\ref{lem:p-node-still-exists} is still correct.
Consequently, both theorems still hold.

\section{Numerical Results}\label{sec:result}

\graphicspath{{figures/}}

\subsection{Test Settings, Implementation and Baseline}\label{sec:result:settings}
We implement all four variants MO-CBS-n, MO-CBS-tn, MO-CBS-b, MO-CBS-tb in C++.
We test on a Ubuntu 20.04 laptop with an Intel Core i7-11800H 2.40GHz CPU and 16 GB RAM without multi-threading or compiler optimization.
For comparison, we implement the recent MOM* \cite{ren2021subdimensional} in C++ as a baseline, which can also guarantee finding all cost-unique Pareto-optimal solutions as MO-CBS does.
Another method to compute all cost-unique Pareto-optimal solutions is applying a single-agent multi-objective planner to search the joint graph of all agents. This method has been shown to be computationally inefficient, as the size of the joint graph grows exponentially with respect to the number of agents~\cite{ren2021subdimensional}, which is thus omitted in this article.

In our implementation, for each agent, the heuristic vector is computed by running $M$ exhaustive backwards Dijkstra search from that agent's destination: the $m$-th Dijkstra search ($m=1,2,\dots, M$) uses edge cost values $c_m(e),\forall e\in E$ ($i.e.$ the $m$-th component of the cost vector $\vec{c}(e)$ of all edges).
In our implementation, for the high-level search, all nodes in OPEN are prioritized in the lexicographic order based on their $\vec{g}$-vectors and the minimum one is popped from OPEN in each search iteration.
This implementation guarantees that every popped high-level node has a non-dominated cost vector among all nodes in OPEN.

We select (grid) maps of different types from a MAPF data set \cite{stern2019multi}. For each map, an un-directed graph $G$ is generated by making each grid four-connected.
To assign cost vectors to edges in $G$, we follow the convention in~\cite{pulido2015dimensionality} by assigning each edge an $M$-dimensional cost vector with each component being an integer randomly sampled from $[1,C_{max}]$, where $C_{max}$ takes different values in the following sections.
We use the start-goal pairs from the ``random'' category in the data set \cite{stern2019multi}, and for each map, there are 25 instances.
We set a runtime limit of 300 seconds for each instance.

\subsection{MO-CBS Low-Level Search}\label{mocbs:sec:result:low-level}

We begin by investigating different low-level planners within the framework of MO-CBS.
We fix $M=2$ and compare MO-CBS-b and MO-CBS-n.
These two planners expand nodes in the same order for the high-level search, and the only difference between them is the low-level search.

\subsubsection{Different $C_{max}$}
First, we set $M=2,N=4$ (fixed) and vary $C_{max}$ in an empty $16\times16$ map.
Let $\bar{t}$ denote the average runtime (in micro-seconds) of the low-level planner per call during the MO-CBS search.
As shown in Table~\ref{mocbs:tab:low_level_cmax} (a), the low-level planner of MO-CBS-b ($i.e.$ BOA*-st) runs up to twice as fast as the low-level planner of MO-CBS-n ($i.e.$ NAMOA*-dr-st), which can be observed by comparing 15.2ms against 32.8ms in the $C_{max}=8$ row.
The advantage of BOA*-st over NAMOA*-dr-st is more obvious as $C_{max}$ increases.
We discuss the reason for this in the ensuing paragraphs.
We also show the corresponding number of expansions (\#Exp) of the low-level planners in Table~\ref{mocbs:tab:low_level_cmax} (b).
Note that \#Exp is not an accurate indicator to compare the computational efforts of BOA*-st and NAMOA*-dr-st, since the computational effort of each expansion in BOA*-st is in general cheaper than NAMOA*-dr-st due to the improved dominance checks. More details can be found in~\cite{ulloa2020simple}.

\begin{table}[tb]
	\centering
		\tabcolsep=0.1cm
	\renewcommand{\arraystretch}{1.2}
	\begin{tabular}{ | l | l | l | }
		\hline
		\textbf{(a)}& \multicolumn{2}{c}{min. / median / max. low-Level RT (unit: ms)} \vline
		\\ \hline
		$C_{max}$ & MO-CBS-b & MO-CBS-n
		\\ \hline
		$2$ & {1.9 / 2.9 / 6.2} & {2.0 / 3.8 / 7.9}
		\\ \hline
		$5$ & {2.1 / 5.0 / 17.7} & {2.2 / 6.6 / 26.4}
		\\ \hline
		$8$ & {2.1 / 6.3 / 15.2} & {2.2 / 8.2 / 32.8}
		\\ \hline
		\hline
		\textbf{(b)}& \multicolumn{2}{c}{min. / median / max. low-Level \#Exp.} \vline
		\\ \hline
		$C_{max}$ & MO-CBS-b & MO-CBS-n
		\\ \hline
		$2$ & {11.0 / 43.1 / 157} & {9.8 / 41.7 / 154}
		\\ \hline
		$5$ & {12.3 / 112 / 533} & {10.5 / 107 / 534}
		\\ \hline
		$8$ & {13.8 / 158 / 460} & {12.0 / 151 / 718}
		\\ \hline
	\end{tabular}
	\caption{Runtime (RT) data of the low-level planner of MO-CBS-b ($i.e.$ BOA*) and MO-CBS-n ($i.e.$ NAMOA*-dr) with $M=2,N=4$ (fixed) and varying $C_{max}$ in the empty $16\times16$ map.
	Let $\bar{t}$ denote the average runtime (in microseconds) of the low-level planner per call during the MO-CBS search.
	Table (a) shows the minimum, median and maximum of $\bar{t}$ over all instances.
	Table (b) shows the same statistics of the number of expansions (\#Exp.) of the low-level planner per call.
	BOA*-st runs up to twice as fast as NAMOA*-dr-st, and the advantage of BOA*-st is more obvious as $C_{max}$ increases.}
	\label{mocbs:tab:low_level_cmax}
\end{table}

\subsubsection{Different Maps}
We then show how the size of the map affects the low-level planner.
Table~\ref{mocbs:tab:low_level_maps} (a) shows the minimum, median and maximum $\bar{t}$ over all instances.
As the map size increases, both planners need more runtime per call in general.
In the map den312d of size 65x81, NAMOA*-dr-st takes up to around 3 seconds per call while BOA*-st needs around 2 seconds.
Considering that MO-CBS needs to iteratively invoke the low-level planner, a speed-up in the low-level planner can help with the overall MO-CBS search, which is verified in the resulting success rates of MO-CBS-b and MO-CBS-n in the test with the den312d map: out of the 25 instances, MO-CBS-b succeeds 20 instances while MO-CBS-n succeeds 18 instances.

\begin{table}[tb]
	\centering
	\renewcommand{\arraystretch}{1.2}
	\begin{tabular}{ | l | l | l | }
		\hline
		\textbf{(a)}& \multicolumn{2}{c}{min. / median / max. low-Level RT (unit: ms)} \vline
		\\ \hline
		Map & MO-CBS-b & MO-CBS-n
		\\ \hline
		empty 16x16 & {1.9 / 2.9 / 6.2} & {2.0 / 3.8 / 7.9}
		\\ \hline
		random 32x32 & {4.4 / 9.8 / 57.7} & {4.6 / 11.5 / 78.1}
		\\ \hline
		den312d 65x81 & {22.9 / 142.3 / 2267.4} & {24.7 / 188.7 / 3062.2}
		\\ \hline
		\hline
		\textbf{(b)}& \multicolumn{2}{c}{min. / median / max. low-Level \#Exp.} \vline
		\\ \hline
		Map & MO-CBS-b & MO-CBS-n
		\\ \hline
		empty 16x16  & {11.0 / 43.1 / 157} & {9.8 / 41.7 / 154}
		\\ \hline
		random 32x32 & {10.3 / 197 / 2098} & {9.0 / 195 / 2195}
		\\ \hline
		den312d 65x81 & {281 / 3925 / 68476} & {278 / 3916 / 64232 }
		\\ \hline
	\end{tabular}
	\caption{Similarly to Table~\ref{mocbs:tab:low_level_cmax}, this table reports the runtime (RT) data of the low-level planners when $M=2,N=4,C_{max}=2$ (fixed) in maps of different types and sizes.
	Table (a) and (b) show the statistics of $\bar{t}$ and of \#Exp. respectively over all instances.
	BOA*-st runs faster than NAMOA*-dr-st in all the maps. In the last den312d map, MO-CBS-n solves 18 instances while MO-CBS-b solves 20 ($i.e.$ two more) instances because of the faster low-level planner BOA*-st.}
	\label{mocbs:tab:low_level_maps}
\end{table}

\subsubsection{Discussion and Summary}
Finally, we report the statistics of the number of root nodes (\#Root) of MO-CBS over all instances corresponding to the tests in Table~\ref{mocbs:tab:low_level_cmax} and~\ref{mocbs:tab:low_level_maps}.
Note that \#Root is the product of the numbers of Pareto-optimal individual paths of each agent, and the number of agents is fixed ($N=4$) in this experiment. 
The geometric mean of \#Root over agents is an indicator of the number of individual Pareto-optimal paths for each agent.
From Table~\ref{mocbs:tab:num_root}, as $C_{max}$ increases or the size of the map increases, \#Root grows correspondingly, and it indicates that each agent tends to have more individual Pareto-optimal paths.
Combined with Table~\ref{mocbs:tab:low_level_cmax} and~\ref{mocbs:tab:low_level_maps}, it shows that, finding more Pareto-optimal paths burdens a low-level planner in general.

\begin{table}[tb]
	\centering
	\renewcommand{\arraystretch}{1.2}
	\begin{tabular}{ | l | l | }
		\hline
		 & {min. / median / max. \#Root}
		\\ \hline
		empty 16x16, $C_{max}=2$ & {2 / 12 / 108}
		\\ \hline
		empty 16x16, $C_{max}=5$  & {6 / 150 / 1458}
		\\ \hline
		empty 16x16, $C_{max}=8$  & {12 / 330 / 2205}
		\\ \hline
		random 32x32, $C_{max}=2$  & {2 / 30 / 720}
		\\ \hline
		den312d 65x81 $C_{max}=2$ & {42 / 1920 / 24948}
		\\ \hline
	\end{tabular}
	\caption{The minimum, median and maximum number of roots (\#Root) of the MO-CBS search with $N=4$ (fixed) and varying $C_{max}$ in various maps.
	As $C_{max}$ increases or the size of the map increases, \#Root grows correspondingly, and it indicates that each agent tends to have more individual Pareto-optimal paths, which burdens the low-level planner.}
	\label{mocbs:tab:num_root}
\end{table}

To summarize, first, instances with larger $C_{max}$ and larger maps tend to have more Pareto-optimal individual paths, and it takes the low-level planner more time and expansions to find those Pareto-optimal paths in general.
Second, BOA*-st clearly outperforms NAMOA*-dr-st in terms of the runtime (when $M=2$).
Therefore, for the rest of the experiments when $M=2$, we limit our focus to MO-CBS-b.

\subsection{MO-CBS High-Level Search}\label{mocbs:sec:result:high-level}

\subsubsection{Success Rates}
We then investigate different high-level search strategies of MO-CBS.
We fix $M=2$ and compare MO-CBS-b (without the tree-by-tree expansion) and MO-CBS-tb (with the tree-by-tree expansion).
We test both algorithms in four maps of various sizes with varying $N$ ranging from $2$ to $10$ with a step size of $2$.
As shown in Fig.~\ref{mocbs:fig:mocbs_vs_momstar}, MO-CBS-b slightly outperforms MO-CBS-tb in terms of the success rate in all four maps.
An intuitive explanation is that MO-CBS-b generates \emph{all} the root nodes at initialization and inserts them into OPEN for search, which makes the search process more informed.
Different from MO-CBS-b, MO-CBS-tb generates the root nodes in a tree-by-tree manner during the search, and greedily search one tree after another, which makes the search process less informed.
To verify the reason, we conduct the following comparison.

\begin{figure}[tb]
	\centering
	\includegraphics[width=\linewidth]{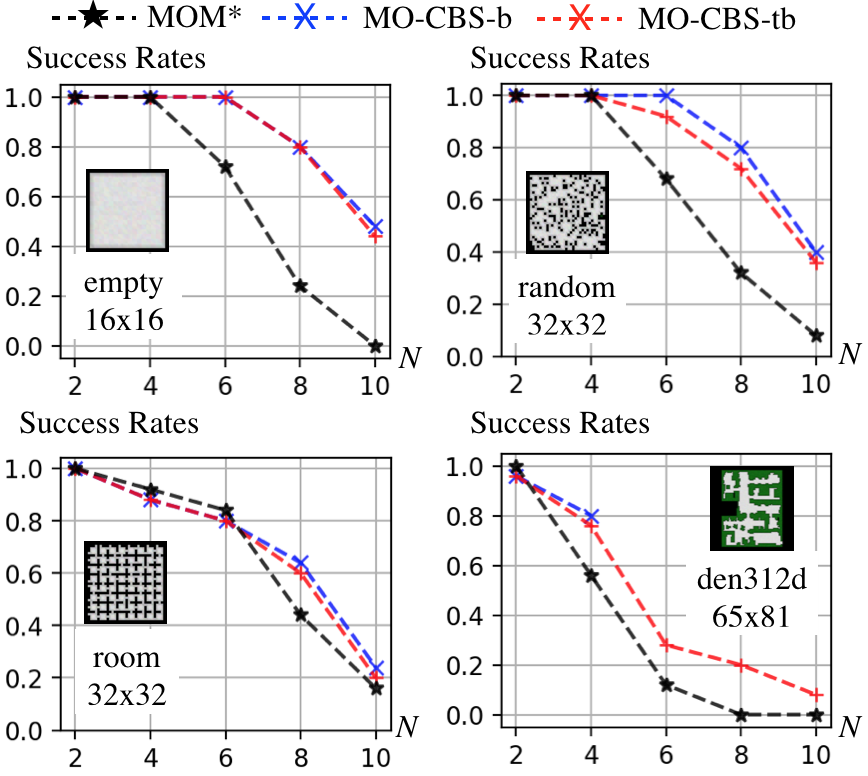}
	\caption{Success rates of MOM* (baseline), MO-CBS-b (this work) and MO-CBS-tb (this work) in four maps of different sizes. MO-CBS based algorithms outperforms the baseline in general. The maximum enhancement of the success rate (around 60\%) can be observed at $N=8$ in the empty map.
	In den312d when $N\geq6$, MO-CBS-b runs out of memory at initialization for some of the instances and is thus omitted.}
	\label{mocbs:fig:mocbs_vs_momstar}
\end{figure}

\subsubsection{Number of Conflicts and Filtered Nodes}
First, we show in Fig.~\ref{mocbs:fig:mocbs_n_conflict} the statistics about the numbers of conflicts (\#Conflict) resolved by both algorithms ($i.e.$ count the times when Alg.~\ref{alg:mocbs} reaches line 11) over all instances.
We can observe that MO-CBS-b in general needs to resolve less conflicts than MO-CBS-tb, which indicates the search process of MO-CBS-b is more efficient than the one of MO-CBS-tb (given that MO-CBS-b has higher success rates).

\begin{figure}[htb]
	\centering
	\includegraphics[width=\linewidth]{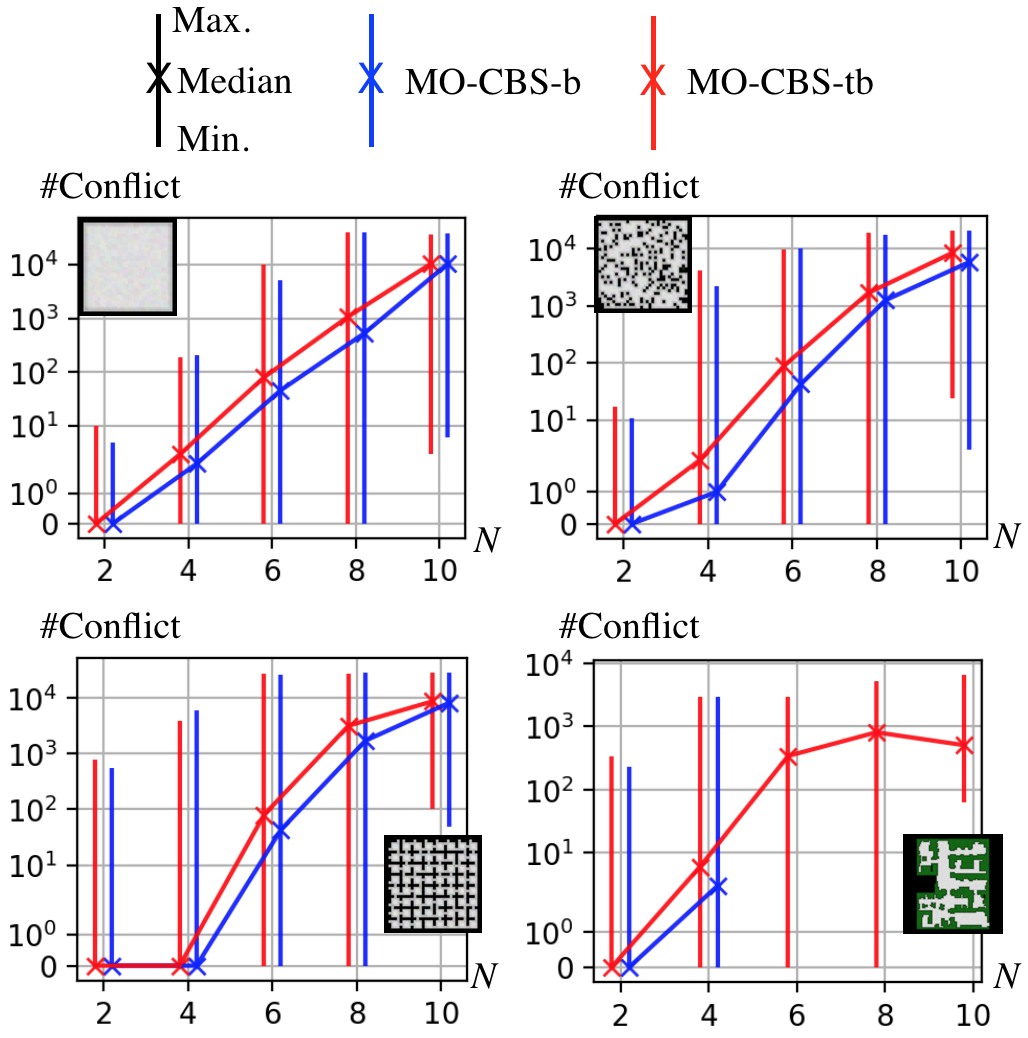}
	\caption{A comparison of the minimum, median and maximum number of conflicts resolved per instance by MO-CBS-b and MO-CBS-tb in various maps.
	In general, MO-CBS-tb needs to resolve more conflicts than MO-CBS-b during the search.
	}
	\label{mocbs:fig:mocbs_n_conflict}
\end{figure}


Second, we look at the statistics about the number of filtered nodes of both algorithms.
The number of filtered nodes (\#Filter) is defines as the times when the \textit{Filter} procedure returns true (line 6 and 20), which means a candidate node is discarded.
In the map random 32x32, as shown in Table.~\ref{mocbs:tab:high_level_solFil}, MO-CBS-b tends to filter fewer nodes than MO-CBS-tb.
Combined with Fig.~\ref{mocbs:fig:mocbs_n_conflict}, we can observe that, with similar success rates, MO-CBS-b resolves less conflicts and filters less nodes than MO-CBS-b does, which indicates that MO-CBS-b can search more efficiently than MO-CBS-tb in general.

\begin{table}[tb]
	\centering
		\tabcolsep=0.1cm
	\renewcommand{\arraystretch}{1.2}
	\begin{tabular}{ | l | l | l |  }
		\hline
		& \multicolumn{2}{c}{min./median/max. \#Filter} \vline
		\\ \hline
		N & MO-CBS-b & MO-CBS-tb
		\\ \hline
		2 & {0 / 2 / 74} & {0 / 6 / 118}
		\\ \hline
		4 & {0 / 51 / 18052} & {1 / 111 / 33051}
		\\ \hline
		6 & {3 / 578 / 74390} & {10 / 1358 / 72175}
		\\ \hline
		8 & {4 / 8881 / 93308} & {13 / 22050 / 116700}
		\\ \hline
		10 & {0 / 23698 / 122478} & {1333 / 41222 / 254374}
		\\ \hline
	\end{tabular}
	\caption{The minimum, median and maximum of the number of filtered nodes per instance (\#Filter) by MO-CBS-b and MO-CBS-tb in the random 32x32 map with varying $N$.
	In general, MO-CBS-b filters less number of nodes than MO-CBS-tb.
	Combined with Fig.~\ref{mocbs:fig:mocbs_n_conflict}, with similar success rates, MO-CBS-b resolves less conflicts and filters less nodes than MO-CBS-b does, which indicates that MO-CBS-b can search more efficiently than MO-CBS-tb in general.}
	\label{mocbs:tab:high_level_solFil}
\end{table}

\subsubsection{Memory Issue}
Although MO-CBS-b can search more efficiently than MO-CBS-tb in general, MO-CBS-b have to generate all the root nodes for initialization, which can consume a lot of memory.
In the den312d map when $N=6$, MO-CBS-b runs out of the 16GB memory and fails to initialize for some of the instances, while MO-CBS-tb bypasses this memory issue due to the tree-by-tree expansion strategy.
As shown in Table~\ref{mocbs:tab:den312d_root}, when $N\geq6$, the number of roots grows up to millions, which makes MO-CBS-b run out of memory to initialize all the root nodes.

\begin{table}[tb]
	\centering
		\tabcolsep=0.1cm
	\renewcommand{\arraystretch}{1.2}
	\begin{tabular}{ | l | l | l | l | l | l |  }
		\hline
		(den312d) & \multicolumn{5}{c}{min./median max. \#Root} \vline 
		\\ \hline
		N & 2 & 4 & 6 & 8 & 10
		\\ \hline
		min. & 7 & 42 & 504 & 2,160 & 11,664
		\\ \hline
		median & 36 & 1,920 & 39,600 & 2,566,080 & 136,080,000
		\\ \hline
		max & 216 & 24,948 & 2,649,536 & 558,379,008 & 17,868,128,256
		\\ \hline
	\end{tabular}
	\caption{The statistics of the number of root nodes (\#Root) of all instances in the den312d 65x81 map with varying $N$.
	As $N$ increases, \#Root grows too large for MO-CBS-b to initialize with the 16GB RAM memory, while MO-CBS-tb bypasses this issue due to the tree-by-tree expansion strategy.}
	\label{mocbs:tab:den312d_root}
\end{table}

\subsubsection{Number of Pareto-optimal Solutions}
This section reports the statistics of the number of Pareto-optimal solutions (\#Sol) and the number of root nodes (\#Root) over all succeeded instances ($i.e.$ all Pareto-optimal solutions are found) in the map random 32x32.
As shown in Table~\ref{mocbs:tab:high_level_n_sol}, both the number of root nodes and the number of Pareto-optimal solutions grow as $N$ increases.\footnote{When $N=8,10$, since the success rates are not $100\%$, there is a bias towards easy instances that have fewer Pareto-optimal solutions. When $N=2,4,6$, since the success rates are $100\%$, there is no such a bias.}
The discrepancy between the \#Root and \#Sol indicates that a large number of root nodes are filtered instead of leading to Pareto-optimal solutions.
It implies a possible future work direction: one can develop new methods for the initialization step of MO-CBS to improve the computational efficiency.

\begin{table}[h]
	\centering
		\tabcolsep=0.1cm
	\renewcommand{\arraystretch}{1.2}
	\begin{tabular}{ | l | l | l |  }
		\hline
		& \multicolumn{2}{c}{min./median/max. \#Sol and \#Root} \vline
		\\ \hline
		N & \#Sol & \#Root
		\\ \hline
		2 & {1 / 5 / 14} & {1 / 6 / 44}
		\\ \hline
		4 & {2 / 9 / 20} & {2 / 30 / 720}
		\\ \hline
		6 & {5 / 13 / 24} & {8 / 249 / 5292}
		\\ \hline
		8 & {6 / 16 / 28} & {10 / 792 / 51840}
		\\ \hline
		10 & {11 / 16 / 29} & {140 / 804 / 97200}
		\\ \hline
	\end{tabular}
	\caption{The minimum, median and maximum of (i) the number of Pareto-optimal solutions (\#Sol) and (ii) the number of root nodes (\#Root) per instance in MO-CBS-b.
	The statistics are computed over all succeeded instances ($i.e.$ all Pareto-optimal solutions are found).}
	\label{mocbs:tab:high_level_n_sol}
\end{table}

\subsection{MO-CBS and MOM*, $M=2,3$}
\subsubsection{Two Objectives}
We compare MO-CBS-b, MO-CBS-tb and MOM* with $M=2$ in four different maps.
When $M=2$, as shown Fig.~\ref{mocbs:fig:mocbs_vs_momstar}, MO-CBS-b and MO-CBS-tb both achieves higher success rates than MOM* in general.
The maximum enhancement of the success rate (around 60\%) can be observed at $N=8$ in the empty map.
In the room map, MOM* has slightly higher success rates than the MO-CBS based algorithms.
In general, it's not obvious under what circumstances MO-CBS is guaranteed to outperform MOM*.
Empirically, there is no leading algorithm that outperforms the other method in all settings.
More discussion can be found in the following paragraphs.

\begin{figure}[tb]
	\centering
	\includegraphics[width=\linewidth]{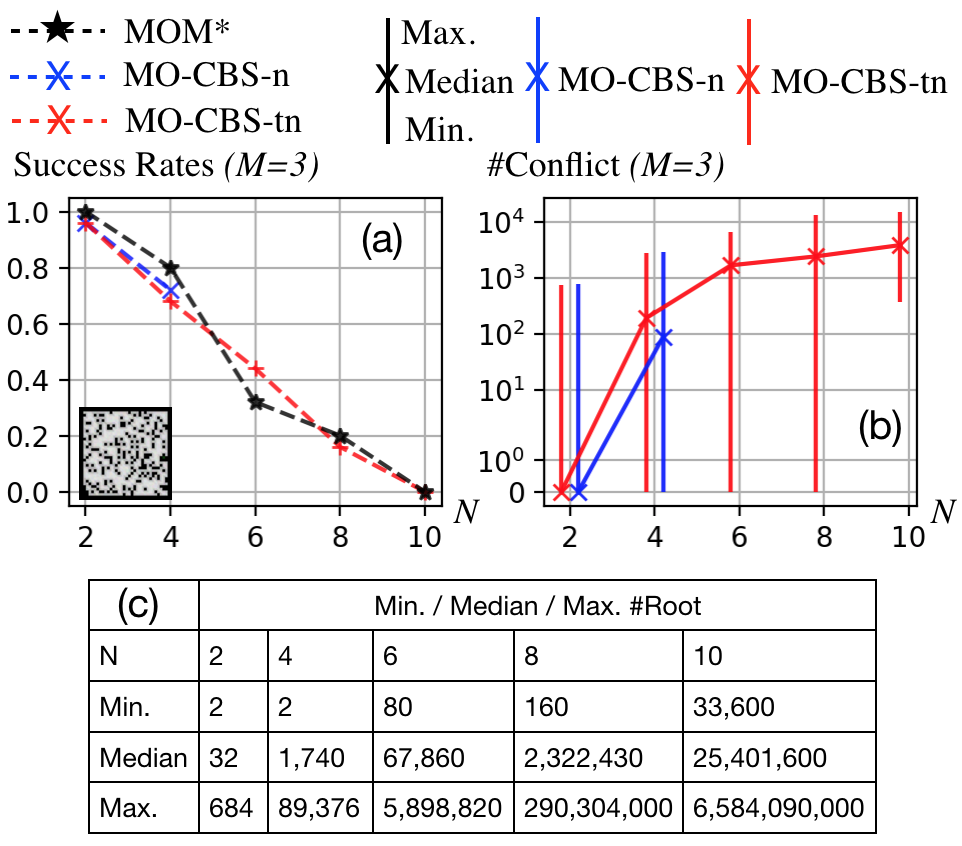}
	\caption{Comparison among MO-CBS-n, MO-CBS-tn, MOM* with $M=3$ (fixed) and varying $N$ in the random 32x32 map.  When $M$ increases from 2 to 3, the problem instances become more challenging and the success rates decrease for all three planners.}
	\label{mocbs:fig:mocbs_vs_momstar_m3}
\end{figure}

\subsubsection{Three Objectives}
We also compare MO-CBS-n, MO-CBS-tn and MOM* with $M=3$ in the random map.
As shown in Fig.~\ref{mocbs:fig:mocbs_vs_momstar_m3}, all three planners achieve similar success rates, which is lower than the corresponding success rates when $M=2$ in Fig.~\ref{mocbs:fig:mocbs_vs_momstar}.
MO-CBS-n fails to initialize due to the large number of root nodes for some instances when $N\geq 6$ and is thus omitted.
In general, when $M$ increases from 2 to 3, the problem becomes more challenging and the success rates decrease for all three planners.

\subsubsection{Discussion}

Since MOM* and MO-CBS are two algorithms that search over different spaces, it's not obvious when one planner is guaranteed to outperform the other.
Intuitively speaking, MOM* searches in the joint graph ($i.e.$ the Cartesian product of individual graphs) with a varying branching factor that is determined by the ``collision set''~\cite{ren2021subdimensional}, the subset of agents that are in conflict.
MO-CBS searches in a different space by detecting and splitting conflicts between agents and the number of conflicts is the decisive factor of the computational efficiency of MO-CBS.

Additionally, in MAPF, all edges are often associated with the \emph{same} unit scalar cost \cite{sharon2015conflict,stern2019multi}, while in MOMAPF, all edges are associated with different cost vectors.
This makes it hard to predict the difficulty of an instance for MOM* or MO-CBS by only looking at the topology of the map without investigating the cost structure.
From our experimental results, one possible indicator about the difficulty of an instance for MO-CBS is the number of root nodes, which reflects the number of individual Pareto-optimal paths of agents, and takes both the topology and the cost structure of the map into consideration.

\subsection{Discussion: MO-CBS and CBS}\label{mocbs:sec:mocbs_vs_cbs}
While (single-objective) CBS can solve up to 21 agents in an empty $8\times8$ four-connected grid and up to hundred of agents in large maps~\cite{sharon2015conflict}, MO-CBS can solve obviously fewer agents for the following reasons.
\textbf{First}, the low-level planner of MO-CBS solves a single-agent multi-objective path planning problem, which is computationally more expensive than the single-objective path planning problem solved by the low-level planner of CBS, especially when there are many individual Pareto-optimal paths to find (as discussed in Sec.~\ref{mocbs:sec:result:low-level}).
\textbf{Second}, in CBS, each agent has only \emph{one} individual optimal path, and CBS terminates when all the conflicts along those individual paths are resolved. In MO-CBS, each agent has \emph{multiple} individual Pareto-optimal paths, and MO-CBS needs to resolve all the conflicts for \emph{any possible combination} of the individual paths, which is computationally much more expensive (Fig.~\ref{mocbs:fig:mocbs_n_conflict}). In other words, the high-level search of MO-CBS needs to search over multiple trees rather than searching a single tree as CBS does (Sec.~\ref{mocbs:sec:result:high-level}).
\textbf{Third}, for conventional CBS, larger maps often lead to less conflicts between agents which allows CBS to handle a large number of agents. For MO-CBS, larger maps can lead to a larger number of individual Pareto-optimal paths (Table.~\ref{mocbs:tab:low_level_maps}), which then slows down the low-level planner and leads to more potential conflicts to be resolved by MO-CBS.

\subsection{Construction Site Path Planning}

This section demonstrates an application example of MO-CBS for practitioners.
We consider multiple agents transporting materials in a construction site \cite{lam2020exact,sartoretti2019distributed,soltani2004fuzzy}.
We focus on planning collision-free paths for a set of agents from their starts to goals while optimizing both the sum of individual arrival times and the sum of individual path risks.
We use a simplified risk model as shown on the left in Fig.~\ref{fig:risk_map}.
We select a random 32x32 map from~\cite{stern2019multi} and compute the corresponding risk map as shown in Fig.~\ref{fig:risk_map} as follows.
The risk score of each cell equals one plus the number of black cells in the 8 neighbors around it, where the black cells represent some semi-constructed architecture.
The risk here is possibly due to the falling items from the architecture or the collision with the architecture.
Similar to the previous tests, each agent can either wait or move to one of the four cardinal adjacent cells.
Each action of the agent incurs a cost vector of length two, where the first component indicates the action time which is always one, and the second component is the risk cost of the arrival cell as aforementioned.
If an agent waits in a cell, the risk cost incurred is the risk score of that cell.

\begin{figure}[tb]
	\centering
	\includegraphics[width=0.95\linewidth]{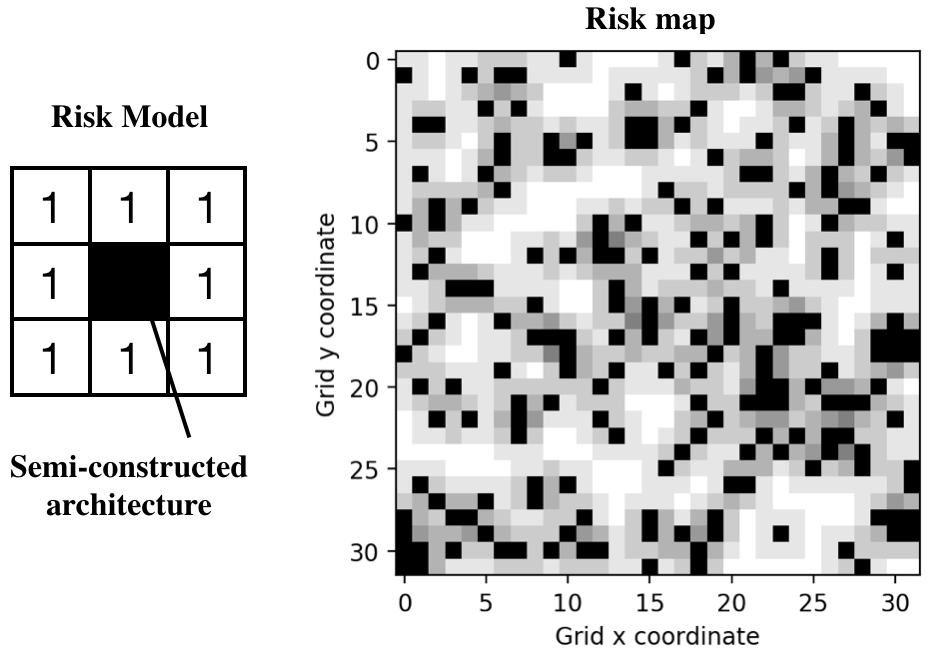}
	\caption{(Left) Risk model. (Right) A risk map where black cells represent semi-constructed architecture and the darkness of a grey cell indicates the risk score of that cell. More details can be found in the text.}
	\label{fig:risk_map}
\end{figure}

\begin{figure*}[tb]
	\centering
	\includegraphics[width=\linewidth]{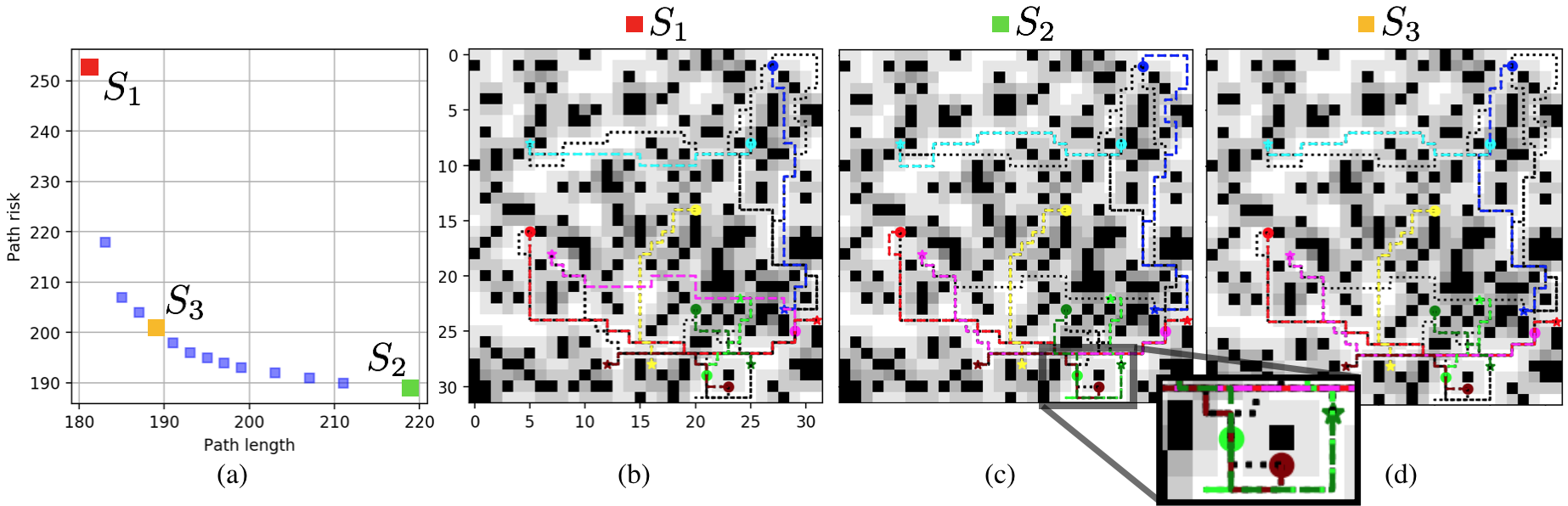}
	\caption{(a) shows the Pareto-optimal front of the construction site example.
	(b), (c) and (d) show three Pareto-optimal solution joint paths corresponding to the red, green and orange solution in (a) respectively.
	In (b), (c) and (d), the colored dotted paths show the individual paths that constitute the corresponding joint path, while the black dotted paths show the individual paths in other Pareto-optimal solutions.
	For solution $S_1$, all agents take shortcut and go through risky zones while for solution $S_2$, all agents are being conservative and go through safe zones. The solution $S_3$ balances the two objectives.
	Finding and visualizing a Pareto-optimal set of solutions can potentially help the human decision maker to understand the underlying trade-off between conflicting objectives and thus make more informed decisions.}
	\label{fig:solution_construction}
\end{figure*}

As shown in Fig.~\ref{fig:solution_construction} (a), the set of Pareto-optimal solutions trades off between the arrival time and path risk.
In solution (joint path) $S_1$ (Fig.~\ref{fig:solution_construction} (b)), all agents take shortcuts regardless of the risks.
For example, the blue agent in $S_1$ passes through many risky cells by following a shortest path.
In solution $S_2$ (Fig.~\ref{fig:solution_construction} (c)), all agents follow the safest paths. For example, in the lower right corner of $S_2$, the light green agent takes a detour to avoid the brown agent to make both of them safe along their respective paths.
The solution $S_3$ (Fig.~\ref{fig:solution_construction} (d)) visualizes a Pareto-optimal solution in the ``middle'', where arrival time and path risk are balanced in some way. 


\section{Conclusion and Future Work}\label{sec:conclude}

In this article, we develop a new algorithm called Multi-Objective Conflict-Based Search (MO-CBS) to solve Multi-Objective Multi-Agent Path Finding (MOMAPF) problems with optimality guarantees.
We also develop several variants of MO-CBS by using various low-level planners and different high-level expansion strategies.
We analyze the properties of MO-CBS and show that the method is able to find all cost-unique Pareto-optimal solutions.
Numerical results show that MO-CBS outperforms the baseline in terms of success rates under a runtime limit.
We also show an application example for practitioners.

There are several directions for future work.
One can consider improving MO-CBS by expediting its initialization, conflict splitting and dominance checks.
Additionally, instead of finding an exact set of Pareto-optimal solutions, one can focus on approximating the Pareto-optimal solutions with faster computational speed and better scalability, in terms of both the number of agents and the number of objectives.
For example, one can consider leveraging evolutionary algorithms~\cite{emmerich2018tutorial} or approximated single-agent multi-objective planners~\cite{goldin2021approximate} to expedite the computation.
One can also develop other types of MOMAPF algorithms by leveraging other (single-objective) MAPF methods to handle agents that move with different speeds~\cite{andreychuk2022multi,ren2021loosely,weise2021scalable} or targets that need allocation~\cite{honig2018conflict,ren22cbss}.


\bibliographystyle{plain}
\bibliography{references}

\newpage

\begin{IEEEbiography}[{\includegraphics[width=1in,height=1.25in,clip,keepaspectratio]{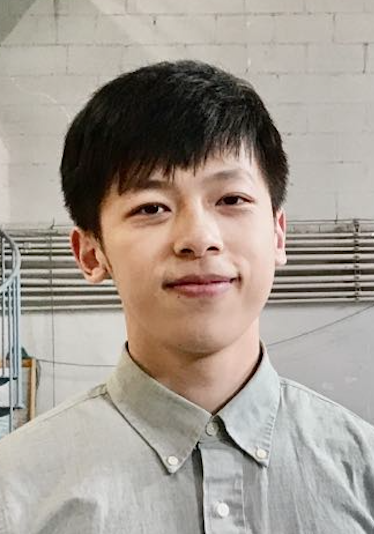}}]{Zhongqiang (Richard) Ren}
(Student Member, IEEE)
received the M.S. degree from Carnegie Mellon University, Pittsburgh, PA, USA, and the dual B.E. degree from Tongji University, Shanghai, China, and FH Aachen University of Applied Sciences, Aachen, Germany. He is currently a Ph.D. candidate at Carnegie Mellon University.
\end{IEEEbiography}

\begin{IEEEbiography}[{\includegraphics[width=1in,height=1.25in,clip,keepaspectratio]{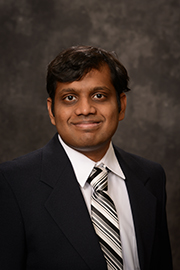}}]{Sivakumar Rathinam}
(Senior Member, IEEE)
received the Ph.D. degree from the University of
California at Berkeley in 2007. He is currently
a Professor with the Mechanical Engineering Department, Texas A\&M University. His
research interests include motion planning and control of autonomous vehicles, collaborative decision
making, combinatorial optimization, vision-based
control, and air traffic control.
\end{IEEEbiography}

\begin{IEEEbiography}[{\includegraphics[width=1in,height=1.25in,clip,keepaspectratio]{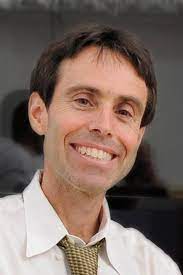}}]{Howie Choset}
(Fellow, IEEE)
received the undergraduate degrees in
computer science and business from the University of
Pennsylvania, Philadelphia, PA, USA, and the M.S.
and Ph.D. degrees in mechanical engineering from
Caltech, Pasadena, CA, USA.\\
He is a Professor in the Robotics Institute, Carnegie
Mellon, Pittsburgh, PA, USA.
\end{IEEEbiography}

\end{document}